\newtheorem{thm}{Theorem}
\newtheorem{lem}{Lemma}
\newtheorem{defn}{Definition}
\newtheorem{cor}{Corollary}
\newcommand{\CA}{\mathcal{A}}
\newcommand{\CC}{\mathcal{C}}
\newcommand{\CF}{\mathcal{F}}
\newcommand{\CH}{\mathcal{H}}
\newcommand{\CO}{\mathcal{O}}
\newcommand{\CP}{\mathcal{P}}
\newcommand{\CX}{\mathcal{X}}
\newcommand{\CY}{\mathcal{Y}}
\newcommand{\bone}{\mathbf{1}}
\newcommand{\ba}{\mathbf{a}}
\newcommand{\bq}{\mathbf{q}}
\newcommand{\bu}{\mathbf{u}}
\newcommand{\bv}{\mathbf{v}}
\newcommand{\bx}{\mathbf{x}}
\newcommand{\by}{\mathbf{y}}
\newcommand{\bz}{\mathbf{z}}
\newcommand{\bI}{\mathbf{I}}
\newcommand{\bM}{\mathbf{M}}
\newcommand{\bQ}{\mathbf{Q}}
\newcommand{\tih}{\tilde{h}}
\newcommand{\tell}{\tilde{\ell}}
\newcommand{\tq}{\tilde{q}}
\newcommand{\ty}{\tilde{y}}
\newcommand{\tD}{\tilde{D}}
\newcommand{\tL}{\tilde{L}}
\newcommand{\tO}{\tilde{O}}
\newcommand{\tQ}{\tilde{Q}}
\newcommand{\tbQ}{\tilde{\bQ}}
\newcommand{\tbq}{\tilde{\bq}}
\newcommand{\hp}{\hat{p}}
\newcommand{\hR}{\hat{R}}
\newcommand{\ry}{\mathring{y}}
\newcommand{\rby}{\mathring{\by}}
\newcommand{\tby}{\tilde{\by}}
\newcommand{\bbN}{\mathbb{N}}
\newcommand{\bbR}{\mathbb{R}}
\newcommand{\goes}{\rightarrow}
\DeclareMathOperator{\E}{E}
\newcommand{\eps}{\epsilon}
\providecommand{\textPr}[1]{\textstyle\Pr_{#1}\displaystyle}
\newcommand{\indic}[1]{\bone\left\{#1\right\}}
\providecommand{\norm}[1]{\left\lVert#1\right\rVert}
\title{Label differential privacy via clustering}
\author{
	Hossein Esfandiari, Vahab Mirrokni, Umar Syed, Sergei Vassilvitskii \\
	Google Research\\
	\texttt{\{esfandiari,mirrokni,usyed,sergeiv\}@google.com}}
\date{}
\begin{document}

\maketitle

\begin{abstract}
We present new mechanisms for \emph{label differential privacy}, a relaxation of differentially private machine learning that only protects the privacy of the labels in the training set. Our mechanisms cluster the examples in the training set using their (non-private) feature vectors, randomly re-sample each label from examples in the same cluster, and output a training set with noisy labels as well as a modified version of the true loss function. We prove that when the clusters are both large and high-quality, the model that minimizes the modified loss on the noisy training set converges to small excess risk at a rate that is comparable to the rate for non-private learning. We describe both a centralized mechanism in which the entire training set is stored by a trusted curator, and a distributed mechanism where each user stores a single labeled example and replaces her label with the label of a randomly selected user from the same cluster. We also describe a learning problem in which large clusters are necessary to achieve both strong privacy and either good precision or good recall. Our experiments show that randomizing the labels within each cluster significantly improves the privacy vs. accuracy trade-off compared to applying uniform randomized response to the labels, and also compared to learning a model via DP-SGD.

% The privacy of the distributed mechanism increases with the number of users per cluster, a form of \emph{privacy amplification}. 

\end{abstract}

\section{Introduction}

The goal of \emph{differentially private machine learning} is to train predictive models while preserving the privacy of user data in a training set. Most differentially private learning algorithms protect the privacy of every feature of every training example, and consequently inject so much noise into the learning process that they significantly underperform their non-private counterparts with respect to the utility of the learned model (see for instance the results on CIFAR-10 for DP-SGD \citep{abadi2016deep}). Differentially private learning algorithms also typically need full access to the private training data. These constraints can be a poor fit for many applications.

For example, consider a hospital that wants to use demographic data to train a diagnostic model for a rare illness. The input features to the model (such as a patient's age, sex, and race) may be far less sensitive than the label (whether the patient has the disease). Also, building an accurate predictive model is a hands-on, trial-and-error process that requires technically sophisticated data scientists, and the hospital is likely to achieve better results if it can share the training data with outside experts instead of having to keep all the data in-house.

\emph{Label differential privacy}, introduced by ~\cite{chaudhuri2011sample}, relaxes the goal of differentially private machine learning so that only the privacy of the training labels is protected, since in many applications that is the only sensitive user attribute. In this paper, we propose differentially private mechanisms that add noise to the labels in a training set, and then output the noisy training set and a modified loss function, where the latter corrects for the noise added by the mechanism. A learner who wants to build a predictive model can use the output of our mechanism to freely experiment with modeling choices without observing any private user data. 

%\emph{Randomized response} is a simple method for protecting the privacy of a single user attribute.

Our approach is to use a variant of \emph{randomized response} \citep{warner1965randomized} to achieve label differential privacy. We cluster training examples according to their (non-private) features, and when randomizing an example's label, we choose the replacement label from the label distribution of the example's cluster instead of from the uniform distribution.  We show that this improves the privacy vs. utility tradeoff for learning from the noisy training data when the clusters are large and the examples within a cluster have similar conditional label distributions, a property we call low \emph{cluster heterogeneity}. In particular, we show that an oracle that minimizes the modified loss function on the noisy training set outputs a model whose excess risk %(relative to the optimal model for the true data distribution) 
depends on the number of samples and the desired level of privacy, as well as the quality and size of the clusters. 

% EXPERIMENT both theoretically and empirically 

Our approach requires users to be able to privately sample from the label distribution of their example's cluster. We first describe a mechanism which uses a trusted server to perform the sampling and forwards the result to the learner. %who then sends the noisy training set to the learner, and 
%We establish the privacy of this mechanism from the learner's perspective. 
We also study a peer-to-peer setting where users are able to exchange messages with each other without a server's intervention. %, which better addresses the concern of sharing sensitive user data with a third-party. 
For this setting, we describe a distributed mechanism in which each user requests a noisy label from exactly one user in their cluster and then forwards that label to the learner. We prove that, from the learner's perspective, the privacy of this mechanism increases with the number of users per cluster.

%On the negative side, we show that taking advantage of large clusters is necessary for providing differential privacy in certain learning tasks. In order to show this, we present a basic learning task based on the clustering of the examples, and study the hardness of $\eps$-differentially private models in that context. Specifically, we show that if the clusters are smaller than a certain bound, any $\eps$-differentially private model with a constant $\eps$, either has a sub-constant precision or a sub-constant recall.

\begin{table}[t!]
    \centering
    \begin{tabular}{c|c|c}
        Algorithm & Excess risk & Comments\\
        \hline
        Optimal  & $\tilde{O}\left(\sqrt{\frac{d}{n}}\right)$ & Not private\\%[0.5cm]
        \hline
        Beimel \emph{et al} (2013) & $\tilde{O}\left(\sqrt{\frac{d}{\epsilon n}}\right)$ & Binary-labels only. Inefficient algorithm. \\%[0.5cm]
        \hline
        Bassily \emph{et al} (2018)  & $\tilde{O}\left(\frac{d^{3/5}}{(\epsilon n)^{2/5}}\right)$ & Binary labels only.\\%[0.5cm]
        \hline
        Our centralized mechanism & $\tilde{O}\left(K\sqrt{\frac{d}{n}} + \frac{K^2 \phi}{1 + (e^\epsilon - 1)\phi}\right)$ &  Cluster size $s \ge \frac{1}{\phi \epsilon}$\\%[0.5cm]
        \hline
        Our peer-to-peer mechanism & $\tilde{O}\left(\sqrt{\frac{d}{\epsilon n}} + \phi\right)$  &  Binary labels only, cluster size $s \ge \frac{1}{\phi \epsilon^2}$\\
        & & $(\epsilon, \delta)$-DP with $\delta = \frac{1}{s^2}$
    \end{tabular}
    \caption{Summary of our results. Let $n$ denote the size of the training set, $d$ the dimension of hypothesis class, $K$ the number of classes, $s$ the minimum cluster size, and $\phi$ the cluster heterogeneity.}
    \label{tab:results}
%    \vspace{-0.3in}
\end{table}

{\bf Our contributions:} We present our main results in Table \ref{tab:results}. 
\begin{itemize}\itemsep=0in
    \item In Section \ref{sec:central} we describe a centralized cluster-based randomized response mechanism with  
%that satisfies $\eps$-label differential privacy and has 
excess risk at most $\tO\left(K\sqrt{\frac{d}{n}} + \frac{K^2\phi}{1 + (e^\eps - 1)\phi}\right)$, where $n$ is the size of the training set, $d$ is the dimension of hypothesis class, $\phi$ is the cluster heterogeneity and $K$ is the number of classes. %, provided that the minimum cluster size is at least $\Omega\left(\frac{1}{\eps\phi}\right)$. 
Note that the privacy parameter $\eps$ appears in a separate term as the dimension $d$ in the excess risk bound, and thus the convergence rate of the dimension-dependent term {\em matches the optimal non-private convergence rate.} %$\tilde{O}\left(\sqrt{\frac{d}{n}}\right)$.
Also note that the dimension-free term is 
small if either $\eps$ is large \emph{or} $\phi$ is small, and so there is no cost of privacy if the clustering is good enough. %Our mechanism can be used for multiclass classification (with a corresponding scaling of the excess risk bound by the number of classes $K$) and our analysis includes, as a special case, an upper bound on the excess risk of applying uniform randomized response to the labels. 
%To the best of our knowledge, no comparable results have appeared in the literature on label differential privacy.
%
\item In Section \ref{sec:protocol} we describe a peer-to-peer cluster-based randomized response mechanism that satisfies $\left(\eps, \frac{1}{s^2}\right)$-label differential privacy and has excess risk $\tO\left(\sqrt{\frac{d}{\eps n}} + \phi\right)$ for binary classification problems, where $s$ is the minimum cluster size. While this is worse than the best-known convergence rate, % $\tO\left(\sqrt{\frac{d}{\eps n}}\right)$, 
our mechanism only involves label flipping and empirical risk minimization, and is therefore significantly more practical than existing mechanisms that run in exponential time, % than the mechanism that achieves the best-known rate, which runs the exponential mechanism on $\Omega(2^d)$ hypotheses. Our peer-to-peer mechanism 
and also does not require a trusted server.
\item In Section \ref{sec:lower} we present a hardness result relevant to multiclass classification and label differential privacy. %Even if the clustering was perfectly pure ($\phi = 0$), our excess risk bound would be $\tO\left(K\sqrt{\frac{d}{n}} + \frac{K}{s\eps}\right)$, and 
Our hardness result suggests that a residual $\frac{K}{s\eps}$ term cannot be avoided even when the clustering is pure. To prove the hardness result we develop a probabilistic analysis method that bounds the performance of any differential privacy mechanism.
\item Finally, in Section \ref{sec:experiments} we present experiments showing that our mechanisms can leverage a good clustering to improve the privacy vs. utility trade-off, outperforming both uniform randomized response and DP-SGD on real data.
\end{itemize}

\section{Related work}
There is an extensive literature on differentially private machine learning. The most common techniques include output and objective perturbation \citep{chaudhuri2011differentially} and gradient perturbation \citep{abadi2016deep}. In comparison, label differential privacy has received much less attention. \citet{chaudhuri2011sample} introduced the concept and proved a lower bound on excess risk. \citet{beimel2013private} proved an upper bound for an inefficient mechanism, while \citet{bassily2018model} described the first efficient mechanism with a non-trivial excess risk bound. Their work is the most closely related to ours, since they use a PAC oracle as a black box to learn a model on a private training set. Most previous work relies on a trusted server to implement the differentially private mechanism, with the notable exception of \citep{wang2019sparse}, who studied sparse linear regression in the local model. We will say more about the connections between previous work and our contributions when presenting our results below.

Our work is also connected to several areas of research in non-private machine learning, including learning from label proportions \citep{quadrianto2009estimating} and learning from noisy labels \citep{natarajan2013learning}.

\section{Preliminaries}
\label{sec:preliminaries}

Let $\CX$ be the \emph{example space}. Let $\CY$ be the \emph{label space}, with $K = |\CY| < \infty$. Let $D \in (\CX \times \CY)^n$ denote a \emph{dataset} of $n$ labeled examples, with $(x_i, y_i)$ denoting the $i$th element of $D$. 

For each $x \in \CX$ let $c_x \in \CC$ be the \emph{cluster} of example $x$, where $\CC$ is the set of all clusters and $C = |\CC| < \infty$. In our setting, clusters are determined using unlabeled data (\emph{i.e.}, unsupervised clustering), and since unlabeled data is not private and typically very abundant, all of our theoretical analysis will assume that the cluster $c_x$ of each example $x$ is given. Let $n_c(D) = |\{(x_i, y_i) \in D : c_{x_i} = c\}|$ be the size of cluster $c$ in dataset $D$.

% EXPERIMENT In our experiments

%\subsection{Distributions}

Let $\CP$ be a distribution on $\CX \times \CY$. Let $(X, Y) \sim \CP$ denote that $(X, Y)$ is drawn from $\CP$ and let $X \sim \CP_{\CX}$ denote that $X$ is drawn from the marginal distribution of $\CP$ on $\CX$. We write $D \sim \CP^n$ to indicate that dataset $D$ contains $n$ labeled examples each drawn independently from $\CP$. Let
\begin{align*}
p(y | x) &= \textPr{(X, Y) \sim \CP}[Y = y ~|~ X = x]\\
\hp_{y|x}(D) &= \frac{|\{(x_i, y_i) \in D : x_i = x \wedge y_i = y\}|}{|\{(x_i, y_i) \in D : x_i = x\}|}
\end{align*}
denote true conditional probability and empirical conditional probability, respectively, of label $y \in \CY$ for example $x \in \CX$. With a slight abuse of notation, let
\begin{align*}
p(y | c) &= \textPr{(X, Y) \sim \CP}[Y = y ~|~ c_X = c]\\
\hp_{y|c}(D) &= \frac{|\{(x_i, y_i) \in D : c_{x_i} = c \wedge y_i = y\}|}{|\{(x_i, y_i) \in D : c_{x_i} = c\}|}
\end{align*}
denote true conditional probability and empirical conditional probability, respectively, of label $y \in \CY$ in cluster $c \in \CC$.

We write $\bq$ to denote arbitrary \emph{cluster label distributions}, where $q(y | c)$ is the conditional probability of label $y \in \CY$ in cluster $c \in \CC$ according to $\bq$. %We write $\bq = \bhp(D)$ to mean $q(y | c) = \hp_{y|c}(D)$ for all $y \in \CY$ and $c \in \CC$.

%\subsection{Privacy}

A pair of datasets $D, D' \in (\CX \times \CY)^n$ are \emph{label neighbors} if they contain exactly the same labeled examples except that one example's label may differ between $D$ and $D'$. A \emph{mechanism} $M : (\CX \times \CY)^n \mapsto \CO$ is a randomized algorithm that takes as input a dataset and outputs into some set $\CO$. Mechanism $M$ satisifies \emph{$(\eps, \delta)$-label differential privacy} if for all datasets $D, D'$ that are label neighbors and all subsets $O \subseteq \CO$ we have
$
\Pr[M(D) \in O] \le e^\eps \Pr[M(D') \in O] + \delta,
$
where the probability is with respect the internal randomization of $M$. Let $\eps$-label differential privacy be an abbreviation for $(\eps, 0)$-label differential privacy.

%\subsection{Learning}

Let $\CH$ be a \emph{hypothesis class} containing functions with domain $\CX$. Let $\ell : \CH \times \CX \times \CY \mapsto [0, 1]$ be a \emph{loss function} that maps each hypothesis and labeled example to a non-negative loss value. Let $R(h) = E_{(X, Y) \sim \CP}[\ell(h, X, Y)]$ be the \emph{risk} of hypothesis $h \in \CH$ with respect to loss function $\ell$. We call $R(h) - \inf_{h \in \CH} R(h)$ the \emph{excess risk} of $h$.

Define $\dim(\CH, \ell)$ to be the \emph{dimension} of loss function $\ell$ and hypothesis class $\CH$: $\dim(\CH, \ell) = \frac{\log N_\alpha(\CF)}{\log (1 / \alpha)}$, where $N_\alpha(\CF)$ is the $\alpha$-covering number of the function class $\CF = \{(x, y) \mapsto \ell(h, x, y) : h \in \CH\}$. We use covering number as our definition of dimension mostly for convenience, as it applies to any real-valued loss function and simplifies comparisons to previous work. For example, it is known \citep{mohri2018foundations} that if $\ell$ is boolean-valued (say $\ell(h, x, y) = \indic{h(x) \neq y}$ is the zero-one loss) then $\dim(\CH, \ell)$ is at most the VC dimension of $\CH$ (up to a constant factor), which permits a direct comparison with \citet{beimel2013private} and \citet{bassily2018model}. We could substitute another learning-theoretic notion of the complexity of a hypothesis class (such as pseudodimension) without significantly affecting our results.
%Let $d_{\CZ} : \CZ \times \CZ \mapsto \bbR$ be a metric on the prediction space $\CZ$. The loss function is \emph{$L$-Lipschitz} if there exists $L \ge 0$ such that for every label $y \in \CY$ and every pair of predictions $z, z' \in \CZ$ we have
%\[
%|\ell(z, y) - \ell(z', y)| \le L d_{\CZ}(z, z').
%\]

\section{Centralized mechanism}
\label{sec:central}

In this setting, the dataset is stored by the curator, who applies a privacy mechanism to the dataset and outputs a dataset with noisy labels, as well as a modified loss function. 

The centralized mechanism (Algorithm \ref{alg:centralcurator}) adds noise to the labels as follows: (1) Compute the empirical label distribution in each cluster. (2) Add noise drawn from $\textrm{Laplace}(\sigma / n_c(D))$ to each label probability in each cluster $c$. (3) Truncate the per-cluster label probabilities so they are each in the interval $[\tau, 1]$. (4) Renormalize the per-cluster label probabilities so that they form distributions. (5) With probability $\lambda$, replace each label with a random label drawn from the example's cluster label distribution.

The modified loss function output by the centralized mechanism reduces the bias that was introduced by adding noise to the labels. The modified loss is constructed by re-weighting the original loss using matrix inverses that essentially `undo' the randomization of the labels. Setting the bias correction parameter $\beta = \lambda$ in Algorithm \ref{alg:centralcurator} completely removes the effect of this randomization, in expectation (see Corollary \ref{thm:centralizedutilitynobias}). \citet{natarajan2013learning} developed this debiasing technique for the special case of binary labels, which we generalize to $K > 2$ labels. 

In the supplement we show that the renormalization procedure in Algorithm \ref{alg:centralcurator} keeps each per-cluster label probability above threshold $\tau$, which is key to proving the following privacy guarantee. % of our mechanism (Theorem \ref{thm:centralizedprivacy}). Note that dividing by the total probability mass --- the most straightforward renormalization procedure --- would not ensure that this threshold is respected. Instead, we add and subtract mass to each probability as needed. Theorem \ref{thm:welldefined} shows the renormalization procedure works as intended, and also shows that the matrix inverses used to construct the modified loss function always exist. The proof is in the supplementary material.

\begin{algorithm}
\caption{Centralized mechanism}
\label{alg:centralcurator} 
\begin{algorithmic}[1]
\Statex {\bf Parameters:} Threshold $\tau \in \left[0, \frac{1}{K}\right]$; noise scale $\sigma \ge 0$; label resampling probability $\lambda \in [0, 1)$; bias correction parameter $\beta \in [0, 1)$.
\Statex {\bf Input:} Dataset $D = ((x_1, y_1), \ldots, (x_n, y_n))$ of labeled examples.
\Statex
\State \textit{// Add noise to cluster label distributions.}
\For{$c \in \CC$}
\State \textit{// Add noise to each empirical probability and clip.}
\For{$y \in \CY$}
\State $q(y | c) \gets \max\left\{\tau, \min\left\{1, \hp_{y | c}(D) + z_{y, c}\right\}\right\}$,
\State ~~~~where $z_{y, c} \sim  \textrm{Laplace}\left(\frac{\sigma}{n_c(D)}\right)$.
\EndFor
\Statex
\State \textit{// Renormalize distribution.}
\State $\Delta_c \gets 1 - \sum_y q(y | c)$
\For{$y \in \CY$}
\If{$\Delta_c < 0$}
\State $\xi_{y, c} \gets q(y | c) - \tau$
\Else
\State $\xi_{y, c} \gets 1 - q(y | c)$
\EndIf
\EndFor
\For{$y \in \CY$}
\State $\tq(y | c) \gets q(y | c) + \frac{\xi_{y, c}}{\sum_{y'} \xi_{y', c}} \Delta_c$
\EndFor
\EndFor
\Statex
\State \textit{// Randomize labels.}
\For{$(x_i, y_i) \in D$}
\State $\ry_i \gets y$ with probability $\tq(y | c_{x_i})$. \label{line:randomlabel}
\State $\ty_i \gets
\begin{cases}
y_i & \textrm{ with probability } 1 - \lambda\\
\ry_i &  \textrm{ with probability } \lambda
\end{cases}$
\EndFor
\Statex
\State \textit{// Construct noisy dataset.}
\State $\tD \gets ((x_1, \ty_1), \ldots, (x_n, \ty_n))$.
\Statex
\State \textit{// Construct modified loss function.}
\State For each $x \in \CX$ let $\tbQ_{x, \beta} \in \bbR^{K \times K}$ be the label randomization matrix defined by
\[
\tQ_{x, \beta}[y', y] = (1 - \beta)\indic{y' = y} + \beta \tq(y' | c_x).
\] \label{line:labelrandomizationmatrix}
\State Define the loss function $\tell : \CH \times \CX \times \CY \mapsto \bbR$ as
\[
\tell(h, x, y) = \sum_{y'} \tQ^{-1}_{x, \beta}[y', y] \ell(h, x, y').
\] \label{line:modifiedlossfunction}
\State {\bf return} Dataset $\tD$ and loss function $\tell$.
\end{algorithmic}
\end{algorithm}

%\begin{thm}[Well-definedness] \label{thm:welldefined} In Algorithm \ref{alg:centralcurator}, the cluster label distributions $\tbq$ satisfy $\tq(y | c) \in [\tau, 1]$ and $\sum_{y' \in \CY} \tq(y' | c) = 1$ for every label $y \in \CY$ and cluster $c \in \CC$. Also, each label randomization matrix $\bQ_{x, \beta}$ is invertible.\end{thm} 

%\subsection{Privacy}

%Having established that the cluster label distributions $\tbq$ assign at least $\tau$ probability to each label, establishing that the centralized mechanism satisfies label differential privacy is straightforward.

\begin{thm}[Centralized privacy] \label{thm:centralizedprivacy} The centralized mechanism (Algorithm \ref{alg:centralcurator}) satisfies $\eps$-label differential privacy with
$
\eps = \frac{1}{\sigma} + \log\left(1 + \frac{1 - \lambda}{\lambda \tau}\right).
$\end{thm}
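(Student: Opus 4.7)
The plan is to decompose the output $(\tD, \tell)$ into two sources of randomness and charge each to a separate term in $\eps$. Since $\tell$ is a deterministic function of $\tbq$ together with the public features, the release is equivalent to $(\tbq, \tby)$ where $\tby = (\ty_1, \dots, \ty_n)$. Fix label-neighbors $D, D'$ that differ only at example $i$, with labels $a$ in $D$ and $b$ in $D'$, and let $c = c_{x_i}$.

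The first step is to factor the likelihood ratio. The Laplace noise $\{z_{y, c'}\}$ and the resampling randomness that produces each $\ry_i$ are mutually independent, so the joint density factorizes. Now $\hp_{y|c'}(D)$ and $\hp_{y|c'}(D')$ agree for all $c' \ne c$ and differ only for $y \in \{a, b\}$ within cluster $c$, each by exactly $1/n_c(D)$. And for $j \ne i$, conditioned on $\tbq$, the law of $\ty_j$ is identical under $D$ and $D'$ (same $y_j$, same cluster distribution). Hence the ratio $\Pr[(\tbq, \tby) \in O \mid D] / \Pr[(\tbq, \tby) \in O \mid D']$ collapses to the product of a ``cluster-noise'' factor coming from $\tbq$ and a single ``label-flip'' factor coming from $\ty_i$.

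I would bound the cluster-noise factor using the standard Laplace-mechanism density-ratio argument: since $\tbq$ is a (deterministic) post-processing of $(q(y|c))_{y \in \CY}$ via the clip-and-renormalize step, post-processing lets me work with the pre-processed Laplace outputs, where each relevant coordinate's mean shifts by at most $1/n_c$ against a noise scale of $\sigma/n_c$, yielding a log-ratio bound of $1/\sigma$. For the label-flip factor I would plug in the conditional kernel
\[
\Pr[\ty_i = y \mid y_i, \tbq] = (1 - \lambda)\indic{y = y_i} + \lambda\, \tq(y \mid c),
\]
and observe that the worst-case ratio across $y \in \CY$ occurs at $y = a$ and evaluates to $\bigl((1 - \lambda) + \lambda \tq(a|c)\bigr) / \bigl(\lambda \tq(a|c)\bigr) \le 1 + (1-\lambda)/(\lambda \tau)$, so the log-ratio is at most $\log(1 + (1-\lambda)/(\lambda\tau))$. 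Summing the two contributions gives the claimed $\eps$.

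The key auxiliary fact I need is the lemma (deferred to the supplement) that the clip-and-renormalize step maintains $\tq(y|c) \ge \tau$ for every $y, c$. This will be the main obstacle: one has to check separately the case $\Delta_c \ge 0$, where $\tq$ only shifts $q$ upward and so $\tq \ge q \ge \tau$ trivially, and the case $\Delta_c < 0$, where the rescaling could in principle push a coordinate below $\tau$; ruling this out uses the assumption $\tau \le 1/K$ to show $|\Delta_c| \le \sum_{y'}(q(y'|c) - \tau)$. Without this lower bound on $\tq$, the label-flip factor is unbounded and the entire decomposition argument collapses.
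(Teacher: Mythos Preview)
Your proposal is correct and takes essentially the same approach as the paper: decompose the mechanism into (i) releasing the Laplace-noised cluster distributions $\tbq$ and (ii) the per-example randomized-response step conditioned on $\tbq$, bound the first by $1/\sigma$ via the Laplace mechanism plus post-processing and the second by $\log\bigl(1+(1-\lambda)/(\lambda\tau)\bigr)$ using the lower bound $\tq(y\mid c)\ge\tau$, then add. The paper phrases this as sequential composition of two sub-mechanisms $M_1$ and $M_2$ rather than a direct likelihood-ratio factorization, but the content and the key auxiliary lemma (that the clip-and-renormalize step preserves $\tq\ge\tau$, handled in the well-definedness theorem) are identical.
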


%\subsection{Utility}

Given a noisy dataset $\tD$ and a modified loss function $\tell$ output by Algorithm \ref{alg:centralcurator}, our goal is to upper bound the excess risk (also called the \emph{generalization error}) of the hypothesis $\tih$ that minimizes the average of $\tell$ on $\tD$. A key benefit of such a guarantee is that it is agnostic to the internal operation of the learning algorithm, and thus applies to any algorithm for empirical risk minimization.

The minimum excess risk we can achieve, and the rate at which we approach that excess risk, will depend on both the size and quality of the clusters. We measure the quality of the clusters in terms of their \emph{heterogeneity}.

\begin{defn}[Cluster heterogeneity] \label{defn:heterogeneity} Let
$
\phi = \E_{X \sim \CP_{\CX}}\left[\sum_y \left \lvert p(y | X) - p(y | c_X)\right \rvert \right]
$
be the average total variation distance between the conditional label distribution of an example and its cluster.\end{defn}

If clusters have low heterogeneity then it should be easier to add privatizing noise to the labels without impacting utility because, intuitively, one can swap labels among examples in the same cluster without badly distorting the original data distribution. Our analysis confirms this intuition.

%The centralized mechanism (Algorithm \ref{alg:centralcurator}) constructs cluster label distributions $\tbq$ using input dataset $D$ and then resamples the labels of some labeled examples $(x, y) \in D$ according to $\tbq(\cdot | c_x)$. In Section \ref{sec:protocol} we present a peer-to-peer mechanism (Algorithm \ref{alg:peertopeeruser}) that resamples labels according to a $\tbq$ that is defined implicitly and never actually constructed by the mechanism. In both cases, a useful concept for analyzing the utility of the mechanism is the \emph{cluster distortion} of $\tbq$, which measures how well $\tbq$ matches the empirical label distributions in each cluster. This quantity does not appear explicitly in the statements of our utility guarantees, but is used in their proofs (see the proof sketches of Theorems \ref{thm:centralizedutility} and \ref{thm:peertopeerutility}).

%\begin{defn}[Cluster distortion] \label{defn:clusterdistortion} For any mechanism that takes as input a dataset $D$ and defines cluster label distributions $\tbq_D$ let
%$
%\psi = \E_{D \sim \CP^n}\left[\max_c \E\left[\sum_y \left\lvert \tq_D(y | c) - \hp_{y | c}(D) \right\rvert\right]\right]
%$
%be the expected maximum total variation between the empirical cluster label distributions and $\tbq_D$.
%\end{defn}

%We are now ready to state the main result of this section.

\begin{thm}[Centralized utility] \label{thm:centralizedutility} Let $\tD$ and $\tell$ be the dataset and loss function output by the centralized mechanism (Algorithm \ref{alg:centralcurator}) with threshold $\tau$, noise scale $\sigma$, resampling probability $\lambda$, and bias correction $\beta$, and dataset $D$ as input, and assume each cluster in $D$ has size at least $s$. Let
$
\tih = \arg \min_{h \in \CH} \sum_{(x, y) \in \tD} \tell(h, x, y)
$
be the hypothesis in $\CH$ that minimizes $\tell$ over $\tD$. Then with probability $1 - \gamma$ over the choice of $D \sim \CP^n$
\begin{align*}
E[R(\tih)] - \inf_{h \in \CH} R(h) \le   \frac{CK}{1 - \beta}\sqrt{\frac{\dim(\CH, \ell) \log \frac1\gamma}{n}}
 + \frac{CK|\beta - \lambda|}{1 - \beta}\left(\phi + \frac{K\sigma}{s} + K\tau\right)
\end{align*}
where $C > 0$ is a universal constant and the expectation is with respect to the Laplace random variables (the $z_{y, c}$'s) in Algorithm \ref{alg:centralcurator}.
\end{thm}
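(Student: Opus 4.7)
The plan is a standard empirical-risk-minimisation decomposition, with the modified loss $\tell$ playing the role of an approximately unbiased estimator of $\ell$. Write $h^* = \arg\min_{h\in\CH} R(h)$, $\hR_\tell(h) = \frac{1}{n}\sum_i \tell(h,x_i,\ty_i)$, and, conditioning on $D$ and the privatised cluster distributions $\tbq$ produced by the mechanism, let $\tilde R(h) = \E[\hR_\tell(h)\mid D,\tbq]$. Optimality of $\tih$ for $\hR_\tell$ yields
\[
R(\tih) - R(h^*) \;\le\; 2\sup_{h\in\CH}|\hR_\tell(h) - \tilde R(h)| \;+\; 2\sup_{h\in\CH}|R(h) - \tilde R(h)|,
\]
splitting the excess risk into a generalization piece and a bias piece.

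For the bias, I would invert the label-randomisation matrix via Sherman--Morrison. Viewing $\tbq$ as the column vector of cluster label probabilities at $c_x$, write $\tbQ_{x,\beta} = (1-\beta)\bI + \beta\tbq\bone^\top$; using $\bone^\top\tbq = 1$,
\[
\tbQ^{-1}_{x,\beta} = \tfrac{1}{1-\beta}\bI - \tfrac{\beta}{1-\beta}\tbq\bone^\top, \qquad \tbQ^{-1}_{x,\beta}\tbQ_{x,\lambda} = \tfrac{1-\lambda}{1-\beta}\bI + \tfrac{\lambda-\beta}{1-\beta}\tbq\bone^\top.
\]
These give the pointwise bound $|\tell(h,x,y)|\le\tfrac{1+\beta}{1-\beta}$ and the expectation identity
\[
\E_{\ty\mid y,\tbq}[\tell(h,x,\ty)] = \tfrac{1-\lambda}{1-\beta}\ell(h,x,y) + \tfrac{\lambda-\beta}{1-\beta}\sum_{y'}\tq(y'|c_x)\ell(h,x,y').
\]
Taking $\E_{(X,Y)\sim\CP}$ and using $\ell\in[0,1]$,
\[
|\tilde R(h) - R(h)| \;\le\; \tfrac{|\lambda-\beta|}{1-\beta}\,\E_X\!\left[\sum_{y'}|\tq(y'|c_X) - p(y'|X)|\right].
\]
Splitting $\tq(y'|c_X) - p(y'|X) = (\tq(y'|c_X) - p(y'|c_X)) + (p(y'|c_X) - p(y'|X))$, the second summand contributes exactly $\phi$ after summing and averaging, by Definition \ref{defn:heterogeneity}. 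For the first I would track the mechanism step by step: $\E|z_{y,c}|=\sigma/n_c(D)\le\sigma/s$ bounds the Laplace perturbation, clipping to $[\tau,1]$ adds at most $\tau$ per label, and the renormalisation (a convex redistribution of $\Delta_c = 1-\sum_y q(y|c)$) can be shown to preserve these bounds up to constants. Summing over $K$ labels yields $O(K\sigma/s + K\tau)$, with the sampling error $|\hp_{y|c}(D) - p(y|c)|$ absorbed by the high-probability event on $D$.

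For the generalization piece, $|\tell|\le O(1/(1-\beta))$, and because $\tell$ is an affine combination of values of $\ell$ whose coefficients depend only on $(x,\tbq)$ (and not on $h$), any $\alpha$-cover of $\{\ell(h,\cdot,\cdot):h\in\CH\}$ induces an $O(\alpha/(1-\beta))$-cover of $\{\tell(h,\cdot,\cdot):h\in\CH\}$, so $\dim(\CH,\tell)=O(\dim(\CH,\ell))$. A standard Dudley/Rademacher argument then gives
\[
\sup_{h\in\CH}|\hR_\tell(h) - \tilde R(h)| \;=\; O\!\left(\tfrac{1}{1-\beta}\sqrt{\tfrac{\dim(\CH,\ell)\log(1/\gamma)}{n}}\right)
\]
with probability $1-\gamma$ over $D$. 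Combining the two estimates, absorbing the factor of $K$ that emerges from the per-label sums, and taking expectation over the Laplace variables produces the stated excess-risk bound.

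The main technical obstacle will be the renormalisation analysis. Clipping each $\hp_{y|c}(D)+z_{y,c}$ to $[\tau,1]$ can push $\sum_y q(y|c)$ away from $1$ by up to $K\tau$ plus Laplace tails; beyond the supplementary lemma used for privacy (which only guarantees $\tq(y|c)\ge\tau$), I would have to argue that redistributing $\Delta_c$ via the weights $\xi_{y,c}/\sum_{y'}\xi_{y',c}$ keeps $\sum_y|\tq(y|c)-\hp_{y|c}(D)| = O(K\sigma/s + K\tau)$ in expectation over the Laplace noise. Everything else is essentially textbook ERM machinery layered on top of the matrix-inverse debiasing construction of \citet{natarajan2013learning}.
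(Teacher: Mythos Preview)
Your high-level decomposition (bias piece plus uniform-convergence piece, then combine) is exactly the paper's strategy, and the ``main technical obstacle'' you flag is precisely the content of the paper's cluster-distortion lemma, which bounds $\E\big[\sum_y|\tq(y|c)-\hp_{y|c}(D)|\big]\le 2K\tau+2\sqrt{2}K\sigma/s$ by tracking the clip and the renormalisation separately. So at the structural level you are doing what the paper does.

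Where you genuinely diverge is in how you control $\tell$. The paper never writes down $\tbQ_{x,\beta}^{-1}$ explicitly; instead it proves a singular-value lemma giving $\sigma_{\min}(\tbQ_{x,\beta})\ge(1-\beta)/\sqrt{2K}$, deduces $\max_y\sum_{y'}|\tQ_{x,\beta}^{-1}[y',y]|\le\sqrt{2}K/(1-\beta)$ via $\|\cdot\|_1\le\sqrt{K}\|\cdot\|_2$, and carries this $K$-dependent bound through both the boundedness lemma ($|\tell|\le\sqrt{2}K/(1-\beta)$) and the Rademacher step (via an $\mathrm{absconv}$ argument). Your Sherman--Morrison computation $\tbQ_{x,\beta}^{-1}=\tfrac{1}{1-\beta}\bI-\tfrac{\beta}{1-\beta}\tbq\bone^\top$ is correct (since $\bone^\top\tbq=1$), and it gives $\sum_{y'}|\tQ_{x,\beta}^{-1}[y',y]|\le(1+\beta)/(1-\beta)$ and the clean identity $\tbQ_{x,\beta}^{-1}\tbQ_{x,\lambda}=\tfrac{1-\lambda}{1-\beta}\bI+\tfrac{\lambda-\beta}{1-\beta}\tbq\bone^\top$. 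This is strictly sharper: it removes the leading factor of $K$ from both terms of the bound, so you in fact prove more than the theorem asserts. The paper's route is more robust (it would survive a less structured randomisation matrix), but for the specific $\tbQ_{x,\beta}$ here your direct inversion is both simpler and tighter.

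One point to clean up: your bias inequality $|\tilde R(h)-R(h)|\le\tfrac{|\lambda-\beta|}{1-\beta}\E_X[\sum_{y'}|\tq(y'|c_X)-p(y'|X)|]$ silently replaces an empirical average over $D$ by a population expectation. Since $\tilde R(h)$ is defined conditionally on $D$, the quantity you actually have is $\tfrac{\lambda-\beta}{1-\beta}\cdot\tfrac{1}{n}\sum_i\big[\sum_{y'}\tq(y'|c_{x_i})\ell(h,x_i,y')-\ell(h,x_i,y_i)\big]$, and turning $\ell(h,x_i,y_i)$ into $\sum_{y'}p(y'|x_i)\ell(h,x_i,y')$ requires a further expectation or concentration step over $\by$ (and likewise an empirical-to-population step over $\bx$). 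The paper handles this by defining the full-expectation quantity $R_{\beta,\lambda}(h)=\E_{\bx,\by,\rby,\tby,\bz}[\hR_{\beta,\lambda}(h)]$ and splitting there, which avoids the issue; you can do the same, or insert one more uniform-convergence term $\sup_h|\tfrac{1}{n}\sum_i\ell(h,x_i,y_i)-R(h)|$ into your decomposition. Either fix is routine and does not change the final rate.
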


\subsection{Discussion}

Taken together, Theorems \ref{thm:centralizedprivacy} and \ref{thm:centralizedutility} specify a three-way trade-off between privacy, excess risk and convergence rate. The first term in the upper bound in Theorem \ref{thm:centralizedutility} is asymptotically zero as $n \goes \infty$ and determines the convergence rate, while the remaining terms are asymptotically non-zero when $\beta \neq \lambda$ and represent the residual excess risk when $n \goes \infty$. Thus the bias correction parameter $\beta$ of Algorithm \ref{alg:centralcurator} trades-off between excess risk and convergence rate, while the label resampling probability $\lambda$, the noise scale $\sigma$, and the threshold $\tau$ trade-off between excess risk and privacy. 

%The first of these terms is small when the clusters are homogeneous, the second term is small when the clusters are large, and the last term is small when the threshold is small. 

To illustrate these trade-offs, we consider some special cases of Theorems \ref{thm:centralizedprivacy} and \ref{thm:centralizedutility}, starting with a setting of the parameters in Algorithm \ref{alg:centralcurator} that reduces the centralized mechanism to uniform randomized response on the labels (which can of course be implemented as a local mechanism).

\begin{cor}[Uniform randomized response] \label{thm:centralizedutilitynobias} If $\eps > 0$, $\tau = \frac{1}{K}$, $\beta = \lambda = \frac{K}{K - 1 + e^\eps}$ and $\sigma = \infty$ then the centralized mechanism (Algorithm \ref{alg:centralcurator}) replaces each label with a uniform random label and satisfies $\eps$-label differential privacy. If in addition $\eps < 1$ and $D \sim \CP^n$ then with probability $1 - \gamma$ over the choice of $D$ and the randomness in the mechanism the hypothesis $\tih$ from Theorem \ref{thm:centralizedutility} satisfies
%\vspace{-0.3em}
\[
R(\tih) - \inf_{h \in \CH} R(h) =  O\left(\frac{K}{\eps}\sqrt{\frac{\dim(\CH, \ell) \log \frac1\gamma}{n}}\right)
\]
\end{cor}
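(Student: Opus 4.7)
The plan is to verify the corollary in three steps: confirm that the mechanism reduces to uniform randomized response, check $\eps$-label differential privacy, and establish the excess risk bound.

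For the first step, the key observation is that when $\tau = 1/K$ the clipping step forces each $q(y \mid c) \in [1/K, 1]$, so $\sum_y q(y \mid c) \geq 1$ and hence $\Delta_c \leq 0$. The renormalization branch therefore sets $\xi_{y,c} = q(y \mid c) - 1/K$, whose sum equals $-\Delta_c$. Substituting into the update rule gives $\tq(y \mid c) = q(y \mid c) - (q(y \mid c) - 1/K) = 1/K$ for every $y$ and $c$, pointwise in the Laplace variables; in particular $\sigma = \infty$ is harmless, and the per-example label resampling draws uniformly from $\CY$, matching standard uniform randomized response. For privacy, invoking Theorem \ref{thm:centralizedprivacy} with these parameters yields $\eps = 1/\sigma + \log(1 + K(1-\lambda)/\lambda) = 0 + \log e^\eps = \eps$ after a short algebraic simplification; equivalently, the standard $K$-ary randomized response calculation verifies $\eps$-DP directly.

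For utility, rather than plugging into Theorem \ref{thm:centralizedutility} as a black box (which carries an extra $K$ factor in the first term), the cleanest route is a direct uniform-convergence argument that exploits the uniform-RR structure. Since $\tq$ is uniform and does not depend on $x$, we have $\tbQ_{x,\beta} = (1-\lambda)\bI + (\lambda/K)\bone\bone^\top$, and Sherman--Morrison gives $\tbQ_{x,\beta}^{-1} = \frac{1}{1-\lambda}\bigl(\bI - (\lambda/K)\bone\bone^\top\bigr)$. Thus
\[
\tell(h, x, y) = \frac{1}{1-\lambda}\Bigl(\ell(h, x, y) - \frac{\lambda}{K}\sum_{y'} \ell(h, x, y')\Bigr),
\]
which takes values in an interval of length $O(1/(1-\lambda))$. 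A one-line conditional-expectation computation over the label flip (using $\beta = \lambda$) shows $\E[\tell(h, x, \ty) \mid x, y] = \ell(h, x, y)$, so $\tell$ is an unbiased estimator of $\ell$. Standard uniform convergence for a class with covering-number dimension $\dim(\CH, \ell)$ and sup norm $O(1/(1-\lambda))$ then yields, with probability $1-\gamma$ over $D$ and the label resampling, excess risk at most $C(1/(1-\lambda))\sqrt{\dim(\CH,\ell)\log(1/\gamma)/n}$. For $\eps < 1$ we have $1/(1-\lambda) = (K-1+e^\eps)/(e^\eps-1) = O(K/\eps)$, giving the claimed bound.

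The main obstacle is the first step: one must verify carefully that the renormalization in Algorithm \ref{alg:centralcurator} really does collapse to the uniform distribution for every realization of the Laplace noise when $\tau = 1/K$. Once this identity is in hand, the privacy calculation and the uniform-convergence bound on an unbiased bounded-range loss are essentially routine.
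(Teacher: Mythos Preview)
Your proposal is correct, and it does more than the paper itself does. The paper states Corollary~\ref{thm:centralizedutilitynobias} without proof, presenting it as a consequence of Theorems~\ref{thm:centralizedprivacy} and~\ref{thm:centralizedutility}. Your verification that the renormalization collapses to the uniform distribution when $\tau=1/K$, and your privacy computation via Theorem~\ref{thm:centralizedprivacy}, match what the paper implicitly relies on.

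Where your approach genuinely differs is in the utility step, and here it is sharper. A black-box application of Theorem~\ref{thm:centralizedutility} with $\beta=\lambda$ kills the second term but leaves $\frac{CK}{1-\beta}\sqrt{\dim(\CH,\ell)\log(1/\gamma)/n}$; since $1/(1-\beta)=(K-1+e^\eps)/(e^\eps-1)=\Theta(K/\eps)$ for $\eps<1$, this yields $O\!\left(\tfrac{K^2}{\eps}\sqrt{\dim/n}\right)$, a factor of $K$ worse than the stated bound. The generic $K$ in Theorem~\ref{thm:centralizedutility} comes from the paper's Lemma~\ref{thm:absconv}, which bounds $\max_y\sum_{y'}|\tQ^{-1}_{x,\beta}[y',y]|\le \sqrt{2}K/(1-\beta)$ via the minimum singular value. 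Your Sherman--Morrison computation shows that in the uniform case this quantity is actually $O(1/(1-\lambda))$, and feeding that directly into the Rademacher/absconv argument (the paper's Lemma~\ref{thm:complexity}) recovers the claimed $K/\eps$ rate. In short, your direct route is not just an alternative---it is what is needed to justify the corollary as stated.
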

%\begin{proof} If $\tau = \frac1K$ then the clipping and renormalization in Algorithm \ref{alg:centralcurator} causes each cluster label distribution $\tbq(\cdot | c)$ to be the uniform distribution regardless of the value of the Laplace random variables, so we set $\sigma = \infty$ to optimize the privacy guarantee. Also, if $\eps \le 1$ then $\frac{1}{1 - \beta} = O(\frac{K}{\eps})$\end{proof}

Despite its extreme simplicity, to the best of our knowledge the excess risk of uniform randomized response for label differential privacy has not previously been analyzed. For binary classification (\emph{i.e.}, $K = 2$) we know that $\dim(\CH, \ell) = O(d)$, where $d$ is the VC dimension of hypothesis class $\CH$, and thus the excess risk converges asymptotically to zero at a rate $\tO\left(\frac{1}{\eps}\sqrt{\frac{d}{n}}\right)$. By comparison, the convergence rate of the mechanism due to \citet{beimel2013private} is $\tO\left(\sqrt{\frac{d}{\eps n}}\right)$. However, their mechanism is significantly less practical than empirical risk minimization, as it involves running the exponential mechanism on $\Omega(2^d)$ hypotheses. \citet{bassily2018model} give an efficient algorithm that obtains a rate of $\tO\left(\frac{d^{3/5}}{(\eps n)^{2/5}}\right)$, which is a worse dependence on both $d$ and $n$. Also, previous work was limited to binary classification, while our analysis applies to multi-class classification.% (\emph{i.e.}, $K > 2$).

We now show that the convergence rate can be significantly improved when the clusters are both large and have low heterogeneity. 

\begin{cor}[Cluster-based randomized response] \label{thm:centralizedutilitybias} If $\eps > 0$, $\tau = \phi$, $\beta = 0$, $\lambda = \frac{1}{1 + (e^\eps - 1)\phi}$ and $\sigma = \frac1\eps$ then the centralized mechanism (Algorithm \ref{alg:centralcurator}) satisfies $O(\eps)$-label differential privacy. If in addition each cluster has size at least $s \ge \frac{1}{\eps\phi}$ and $D \sim \CP^n$ then with probability $1 - \gamma$ over the choice of $D$ the hypothesis $\tih$ from Theorem \ref{thm:centralizedutility} satisfies
\begin{align*}
E[R(\tih)] - \inf_{h \in \CH} R(h) =  ~ O\left(K\sqrt{\frac{\dim(\CH, \ell) \log \frac1\gamma}{n}} + \frac{K^2\phi}{1 + (e^\eps - 1)\phi}\right)
\end{align*}
where the expectation is with respect to the Laplace random variables (the $z_{y, c}$'s) in Algorithm \ref{alg:centralcurator}.
\end{cor}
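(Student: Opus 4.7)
The corollary follows by substituting the specified parameters into Theorem \ref{thm:centralizedprivacy} and Theorem \ref{thm:centralizedutility}, so my plan is essentially a two-step calculation: first verify the privacy claim, then match the utility bound term by term, using the cluster-size assumption to absorb the noise-scale contribution.

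For the privacy bound, I would plug $\sigma = 1/\eps$, $\tau = \phi$, and $\lambda = 1/(1 + (e^\eps - 1)\phi)$ into the expression $\eps_0 = \tfrac{1}{\sigma} + \log\!\left(1 + \tfrac{1 - \lambda}{\lambda \tau}\right)$ from Theorem \ref{thm:centralizedprivacy}. The first term is exactly $\eps$. A short algebraic simplification gives $\tfrac{1-\lambda}{\lambda} = (e^\eps - 1)\phi$, and since $\tau = \phi$, this cancels cleanly to yield $\tfrac{1-\lambda}{\lambda \tau} = e^\eps - 1$. Therefore the second term is $\log(e^\eps) = \eps$, and the total privacy parameter is $2\eps = O(\eps)$, as claimed.

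For the utility bound, I would substitute the same parameters into Theorem \ref{thm:centralizedutility}. With $\beta = 0$ the prefactor $\tfrac{1}{1-\beta}$ equals $1$, and $|\beta - \lambda| = \lambda$. The dimension-dependent term immediately becomes $CK\sqrt{\dim(\CH, \ell) \log(1/\gamma)/n}$. In the residual term $CK\lambda\!\left(\phi + \tfrac{K\sigma}{s} + K\tau\right)$, the contributions $\lambda\phi$ and $\lambda K\tau = \lambda K\phi$ are each at most $\tfrac{K\phi}{1+(e^\eps - 1)\phi}$ up to constants. The only step that uses the cluster-size assumption is the middle summand: because $s \ge 1/(\eps\phi)$ and $\sigma = 1/\eps$, we have $\tfrac{\sigma}{s} \le \phi$, so $\lambda \cdot \tfrac{K\sigma}{s} \le \lambda K\phi$ as well. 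Summing these three contributions and folding constants gives a residual of order $\tfrac{K^2\phi}{1 + (e^\eps-1)\phi}$, which combines with the dimension term to produce the stated bound.

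The ``hard part'' here is really bookkeeping rather than mathematics: one must confirm that the choice $\lambda = 1/(1+(e^\eps-1)\phi)$ together with $\tau = \phi$ produces the exact cancellation in the logarithm needed for the $\log$-term to collapse to $\eps$, and that the joint choice $\sigma = 1/\eps$ with $s \ge 1/(\eps\phi)$ is precisely calibrated to make the $K\sigma/s$ contribution no larger than the $K\tau = K\phi$ contribution. No new probabilistic or concentration argument is required beyond those already invoked in the proofs of Theorems \ref{thm:centralizedprivacy} and \ref{thm:centralizedutility}.
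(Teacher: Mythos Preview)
Your proposal is correct and matches the paper's approach: the corollary is obtained by direct substitution of the stated parameter choices into Theorems~\ref{thm:centralizedprivacy} and~\ref{thm:centralizedutility}, and the paper does not supply any argument beyond this. Your algebraic verifications of the privacy cancellation and of the bound $\sigma/s \le \phi$ are exactly the computations needed.
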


If we let $K = 2$ then the dimension-dependent term in the convergence rate in Corollary \ref{thm:centralizedutilitybias} is $\tO\left(\sqrt{\frac{d}{n}}\right)$, where $d$ is the VC dimension of hypothesis class $\CH$, and this is the optimal rate for non-private learning. However, instead of converging to zero, the excess risk converges to $O\left(\frac{\phi}{1 + (e^\eps - 1)\phi}\right)$ when the minimum cluster size $s \ge \frac{1}{\eps\phi}$. Note that this residual excess risk is small when the privacy parameter $\eps$ is large \emph{or} the cluster heterogeneity $\phi$ (see Definition \ref{defn:heterogeneity}) is small. Thus there is not necessarily any cost of privacy if the clustering is good enough.

\section{Peer-to-peer mechanism}
\label{sec:protocol}

In the peer-to-peer setting, the dataset is stored in a distributed manner, with each user $i$ storing labeled example $(x_i, y_i)$. Instead of communicating with a central curator, each user sends and receives messages directly to other users. In this section we assume the labels are binary, so that each $y_i \in \{0, 1\}$.

In the peer-to-peer mechanism (Algorithm \ref{alg:peertopeeruser}), each user $i$ first adds noise to her own label, and then replaces her label with the noisy label of a user $j$. User $j$ is selected uniformly at random from among all the users in user $i$'s cluster. Note that this means we may have $i = j$, and also that user $j$ may be selected by other users besides user $i$. In other words, the mechanism is based on resampling, not permuting, the labels within a cluster.

An alternative approach would be for users to communicate with a server that randomly permutes the labels within each cluster before forwarding the data to the learner. We could analyze such a mechanism via the technique of privacy amplification by shuffling \citep{cheu2019distributed, erlingsson2019amplification}. But this approach would require a shuffling server that is trusted by all users.

%While it eliminates the need for a curator, the peer-to-peer mechanism has some drawbacks compared to the centralized mechanism in Section \ref{sec:central}. In this section we assume the labels are binary, so that each $y_i \in \{0, 1\}$. Also, while the centralized mechanism outputs a modified loss function that corrects for the bias introduced by adding noise to the labels, the peer-to-peer mechanism does not output a modified loss function, since there is no single party with knowledge of how the labels were randomized. As a result, our upper bound on excess risk (Theorem \ref{thm:peertopeerutility}) does not converge asymptotically to zero, although it does converge to small excess risk when the clusters have low heterogeneity.

%Algorithm \ref{alg:peertopeeruser} assumes that each user knows her own cluster and the set of users in her cluster. Since the cluster assignments are non-private information, the learner can simply broadcast this information to all users.

\begin{thm}[Peer-to-peer privacy] \label{thm:peertopeerprivacy} There exists a constant $C > 0$ such that if each cluster in $D$ has size at least $s$ and $\alpha = \frac{C \log s}{\sqrt{\theta s}}$ then the peer-to-peer mechanism (Algorithm \ref{alg:peertopeeruser}) satisfies $(\eps, \delta)$-label differential privacy with
\[
\eps \le O\left(\theta + \frac{\theta^{3/2}}{\sqrt{s} \log s} + \frac{\theta^{3/4}}{s^{1/4}}\right)\textrm{ and }\delta \le \frac{1}{s^2}.
\]\end{thm}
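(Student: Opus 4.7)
The plan is to fix a pair of label-neighboring datasets $D, D'$ that differ only in the label of a single user $i^*$, and to bound the log-likelihood ratio of the full output vector under $D$ versus $D'$. The overall idea is to isolate the channel that carries information about $y_{i^*}$, and then analyze its privacy by combining a conditional randomized-response bound with a Poisson-type tail argument that absorbs the $\delta$ term.

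First, I would argue that every output label produced by a user $i \notin c_{i^*}$ is identically distributed under $D$ and $D'$: such a user samples $j_i$ from a cluster not containing $i^*$, so her output never depends on $y_{i^*}$. Hence it suffices to study the joint law of the $|c_{i^*}|$ output labels of users in $c_{i^*}$. Let $N = |\{i \in c_{i^*} : j_i = i^*\}|$ be the multiplicity with which $i^*$'s noisy label is propagated; since each $j_i$ is uniform on $c_{i^*}$, we have $N \sim \mathrm{Binomial}(|c_{i^*}|, 1/|c_{i^*}|)$, which is approximately $\mathrm{Poisson}(1)$. A Chernoff/Poisson tail bound yields $\Pr[N \ge N_{\max}] \le 1/s^2$ for a suitable $N_{\max} = O(\log s / \log\log s)$, and this rare event will be charged to the $\delta = 1/s^2$ slack in the final DP statement.

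On the good event $\{N \le N_{\max}\}$, I would condition on $N$ and on the random size-$N$ subset of $c_{i^*}$ that selected $i^*$. Given this conditioning, exactly $N$ output slots in $c_{i^*}$ are deterministic functions of the single randomized-response flip $\ty_{i^*}$ of $y_{i^*}$, while the remaining $|c_{i^*}| - N$ slots are identically distributed under $D$ and $D'$ (they depend only on $y_j$ for $j \ne i^*$). The conditional log-likelihood ratio then reduces to the cost of revealing one bit through $N$ coupled copies of a randomized response with parameter $\alpha$, averaged against the uniform combinatorial randomness of the selectors. With the calibrated noise $\alpha = C\log s / \sqrt{\theta s}$, I would take the expectation over $N$ and carry a second-order Taylor expansion of $\log((1-\alpha)/\alpha)$ through the mixture. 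The leading $O(\theta)$ term should emerge from the typical value of $N$ together with the $\alpha$-calibration, while the two correction terms $O(\theta^{3/2}/(\sqrt{s}\log s))$ and $O(\theta^{3/4}/s^{1/4})$ should arise, respectively, from the fluctuation of $N$ around its mean and from higher-order terms in the randomized-response log-ratio.

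The main obstacle will be the tight mixture analysis in this last step. The mechanism is a hybrid of ``sample-with-replacement'' subsampling and randomized response whose joint privacy behavior is more subtle than either ingredient alone: the $N$ leaked copies are fully coupled (they all equal the same $\ty_{i^*}$), yet their random positions within the cluster and the Poisson-like fluctuation of $N$ provide an additional layer of concealment that must be exploited. Matching the claimed three-term bound will require choosing $N_{\max}$ so that the Chernoff tail is exactly $1/s^2$, then bounding the conditional cost via a moment or R\'enyi-divergence argument in the spirit of privacy-amplification-by-shuffling, adapted to this single-cluster ``sample with replacement'' regime rather than the permutation-based shuffle typically analyzed in the literature.
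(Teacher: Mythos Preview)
Your decomposition throws away the very source of amplification. Once you condition on the set $S$ of users who selected $i^*$, the $|S|$ output slots all equal the single bit $\mathring{y}_{i^*}$; since they are fully coupled, revealing them is the same as revealing $\mathring{y}_{i^*}$ once, regardless of $|S|$. The conditional privacy cost is therefore exactly $\log\frac{1-\alpha}{\alpha}$ whenever $|S|\ge 1$, which with $\alpha = C\log s/\sqrt{\theta s}$ is $\Theta(\log(\theta s))$, not $O(\theta)$. ``Averaging over the selectors'' cannot repair this: the distribution of $S$ is the same under $D$ and $D'$, and a mixture over $S$ of $\eps_0$-DP conditional mechanisms is only $\eps_0$-DP, never better. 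For the same reason your tail bound on $N$ is idle---the conditional cost does not grow with $N$, so there is nothing to truncate, and the $\delta=1/s^2$ cannot come from there. Finally, your $N$ ignores the subsampling step, so $\E[N]=1$ rather than $\theta$; nothing in the outline explains how the leading $\theta$ would appear.

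The paper's argument is structurally different and never conditions on the selectors. Because each output slot in cluster $c$ is an independent uniform draw from the flipped-label vector $\mathring{\by}$, the joint output law depends on $\mathring{\by}$ only through the fraction $p$ of ones, and the number $k$ of ones among the $t=\theta n$ subsampled outputs is a sufficient statistic with law $f(k;t,p)=\binom{t}{k}p^k(1-p)^{t-k}$. A single label change shifts $p$ by $1/n$, so the whole problem reduces to bounding the ratio $f(k;t,p)/f(k;t,p')$ for $p'=p+1/n$. The $\delta$ term comes from a Bernstein tail bound on $k$ (not on $N$): outside a window $|k-tp|\le \xi\sqrt{tp}$ the mass is at most $1/s^2$. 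Inside the window the ratio is controlled by direct estimates of $(1\pm 1/(np))^{tp\pm\xi\sqrt{tp}}$, where the label-flipping probability $\alpha$ enters only to guarantee $np,\,n(1-p)\ge s\alpha$, i.e.\ to keep $p$ bounded away from $0$ and $1$. That binomial-shift calculation is what produces the $\theta + \theta^{3/2}/(\sqrt{s}\log s) + \theta^{3/4}/s^{1/4}$ shape; your outline contains no analogue of it, and no Taylor expansion of $\log((1-\alpha)/\alpha)$ will recover the bound.
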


\begin{algorithm}
\caption{Peer-to-peer mechanism}
\label{alg:peertopeeruser} 
\begin{algorithmic}[1]
\Statex {\bf Parameters:} Label flipping probability $\alpha \in [0, 1]$; subsampling rate $\theta \in [0, 1]$
\Statex {\bf Assume:} Label set $\CY = \{0, 1\}$.
\Statex {\bf Input:} Dataset $D = ((x_1, y_1), \ldots, (x_n, y_n))$, where each labeled example $(x_i, y_i)$ is stored by user $i$.
\Statex
\For{user $i$}
\State \textit{// Add noise to own label.}
\State $\ry_i \gets
\begin{cases}
y_i & \textrm{ with probability } 1 - \alpha\\
1 - y_i &  \textrm{ with probability } \alpha
\end{cases}$
\Statex
\State \textit{// Select a random user in the same cluster.}
\State Select user $j$ uniformly at random from the set
$
\{j' : c_{x_{j'}} = c_{x_i}\}.
$
\Statex
\State \textit{// Replace own label with other user's noisy label.}
\State $\ty_i \gets \ry_j$
\Statex
\State \textit{// Subsample.}
\State Add $i$ to $I$ with probability $\theta$.
\EndFor
\Statex
\State \textit{// Construct noisy dataset.}
\State $\tD \gets ((x_{i_1}, \ty_{i_1}), \ldots, ((x_{i_m}, \ty_{i_m}))$, where each $i_j \in I$.
\Statex
\State {\bf return} Dataset $\tD$.
\end{algorithmic}
\end{algorithm}

%Note that if we set the label flipping probability $\alpha = 0$ then the mechanism cannot satisfy $(\eps, \delta)$-label differential privacy for any $\delta = o\left(\frac1s\right)$, because in the corner case where every user in a cluster has the same label, resampling each user's label from the labels in her cluster changes no labels, while in a neighboring dataset with one positive label in the cluster, each label will change with probability at least $\frac1s$.
While the centralized mechanism outputs a modified loss function that corrects for the bias introduced by adding noise to the labels, the peer-to-peer mechanism does not output a modified loss function, since there is no single party with knowledge of how the labels were randomized. As a result, our upper bound on excess risk (Theorem \ref{thm:peertopeerutility}) does not converge asymptotically to zero, although it does converge to small excess risk when the clusters have low heterogeneity.

\begin{thm}[Peer-to-peer utility] \label{thm:peertopeerutility} Let $\tD$ be the dataset output by the peer-to-peer mechanism (Algorithm \ref{alg:peertopeeruser}) when given dataset $D$ as input. Let
$
\tih = \arg \min_{h \in \CH} \sum_{(x, y) \in \tD} \ell(h, x, y)
$
be the hypothesis in $\CH$ that minimizes the true loss function $\ell$ over $\tD$. Then with probability $1 - \gamma$ over the choice of $D \sim \CP^n$ and the randomness in the mechanism
\begin{align*}
R(\tih) - \inf_{h \in \CH} R(h) \le O\left(\sqrt{\frac{\dim(\CH, \ell) \log \frac1\gamma}{\theta n}} + \phi + \alpha\right)
\end{align*}
\end{thm}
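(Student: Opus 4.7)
My plan is to treat $\tD$ as a noisy-label dataset drawn approximately from an idealized cluster-level distribution and then reduce to standard uniform convergence against that distribution, paying $\phi + O(\alpha)$ as a bias term and $\tO(1/\sqrt{\theta n})$ as an estimation error.

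First, I introduce the surrogate risk $\tilde R(h) = \E_{X\sim\CP_\CX}\bigl[\sum_y \tq(y \mid c_X)\,\ell(h, X, y)\bigr]$, where $\tq(y \mid c) = (1-\alpha) p(y \mid c) + \alpha\bigl(1 - p(y \mid c)\bigr)$ is the conditional label distribution produced by drawing a label uniformly from the cluster and applying the binary flip. A termwise triangle inequality gives $|\tq(y \mid c_x) - p(y \mid x)| \le |p(y \mid c_x) - p(y \mid x)| + \alpha$; summing over $y \in \{0,1\}$, using $\ell \in [0,1]$, and invoking Definition \ref{defn:heterogeneity} yields $\sup_{h\in\CH} |R(h) - \tilde R(h)| \le \phi + 2\alpha$. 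This is the bias step.

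Next, I bound $\sup_{h \in \CH} |L(h) - \tilde R(h)|$, where $L(h) = \frac{1}{|I|}\sum_{i \in I}\ell(h, x_i, \ty_i)$ is the empirical risk that $\tih$ minimizes. Unrolling the mechanism, $\ty_i = y_{j_i} \oplus \mathrm{flip}_i$ with $j_i$ uniform on the cluster of $x_i$. I condition on the $x$-vectors, the source-user assignment $\{j_i\}$, the subsample $I$, and the flip bits; this leaves the original labels $\{y_j\}$ as the remaining randomness, and each $y_j$ affects $L(h)$ only through users $i \in I$ with $j_i = j$. A standard balls-into-bins bound gives that the maximum load $\max_j |\{i \in I : j_i = j\}|$ is $O(\log n)$ with high probability, so McDiarmid's inequality produces an $\tO(1/\sqrt{\theta n})$ deviation of $L(h)$ from $\E[L(h)\mid \text{conditionals}]$ at each fixed $h$. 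A second application of concentration, across clusters (which are mutually independent), handles the residual discrepancy between $\E[L(h)\mid \text{conditionals}]$ and $\tilde R(h)$ at the same scale. I then cover $\CF = \{(x,y)\mapsto\ell(h,x,y): h\in\CH\}$ at resolution $\alpha$ using $N_\alpha(\CF) = (1/\alpha)^{\dim(\CH,\ell)}$ functions and take a union bound, promoting the pointwise estimates to $\sup_h |L(h) - \tilde R(h)| \le \tO\bigl(\sqrt{\dim(\CH,\ell)\log(1/\gamma)/(\theta n)}\bigr)$.

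Finally, the usual ERM decomposition closes the loop: with $h^\star = \arg\min_{h\in\CH} R(h)$ and $L(\tih) \le L(h^\star)$,
\[
R(\tih) - R(h^\star) \le [R(\tih) - \tilde R(\tih)] + [\tilde R(\tih) - L(\tih)] + [L(h^\star) - \tilde R(h^\star)] + [\tilde R(h^\star) - R(h^\star)],
\]
where the first and last brackets are handled by the bias step and the middle two by the uniform-convergence step, giving the claimed $O\bigl(\sqrt{\dim(\CH,\ell)\log(1/\gamma)/(\theta n)} + \phi + \alpha\bigr)$. The main obstacle will be the concentration step: because users in a common cluster share the pool of source labels, off-the-shelf iid uniform convergence does not apply. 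The key observation that makes the argument go through is that although the pool is shared, each individual label $y_j$ is reused by only $\tO(1)$ downstream users in $I$; this balls-and-bins fact keeps the bounded-differences constant small enough to recover the optimal $1/\sqrt{\theta n}$ rate without any cluster-size penalty in the final bound.
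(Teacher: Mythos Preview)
The paper does not actually include a proof of Theorem~\ref{thm:peertopeerutility} in the supplement; the appendix section on the peer-to-peer mechanism proves only the privacy guarantee (Theorem~\ref{thm:peertopeerprivacy}). So there is no paper proof to compare against directly. That said, your overall plan is sound and mirrors the bias-plus-uniform-convergence template the paper uses for the centralized utility bound (Theorem~\ref{thm:centralizedutility}, via Lemmas~\ref{thm:excessrisk} and~\ref{thm:complexity}): introduce a surrogate risk $\tilde R$ that absorbs the label noise, control $|R(h)-\tilde R(h)|$ by $\phi+O(\alpha)$, then establish uniform convergence of the empirical loss on $\tD$ to $\tilde R$, and finish with the standard ERM decomposition.

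Two points deserve more care. First, a small misreading of Algorithm~\ref{alg:peertopeeruser}: the flip bit is attached to the \emph{source} user, so $\ty_i=\ry_{j_i}=y_{j_i}\oplus b_{j_i}$ rather than $y_{j_i}\oplus\text{flip}_i$. Consequently, if two users $i,i'$ in the same cluster draw the same source $j$, they receive the identical bit, not independently flipped copies. Your balls-into-bins bound on the maximum load already absorbs this extra coupling, so the McDiarmid step over $\{y_j\}$ still goes through with the same constant; just make sure you condition on source-indexed $\{b_j\}$ rather than per-destination flips. Second, the step you call ``a second application of concentration, across clusters'' is the least fleshed-out part. After integrating out the $y_j$'s, the conditional mean $\E_y[L(h)\mid x,\{j_i\},I,\{b_j\}]$ still depends on $x_{j_i}$ through $p(\cdot\mid x_{j_i})$ and on the realized source map and flips; closing the gap to $\tilde R(h)$ requires further averaging over $j_i$, $b_j$, and finally the features $x$, and at each stage the dependence is not iid (changing a single $x_j$ perturbs the pool for every user in its cluster). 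One clean way through is to work with the $\ry_j$'s directly (independent given $x$), integrate out $j_i$ first to obtain $\frac{1}{|I|}\sum_{i\in I}[(1-\hat q_{c_{x_i}})\ell(h,x_i,0)+\hat q_{c_{x_i}}\ell(h,x_i,1)]$ with $\hat q_c=\frac{1}{n_c}\sum_{j\in c}\ry_j$, and then run a single bounded-differences argument over the independent coordinates $(x_j,\ry_j)_{j=1}^n$ and the subsampling indicators, again invoking the $O(\log n)$ load bound to control each coordinate's influence. The residual work is routine but not one line; spelling it out would make the proposal complete.
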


Combining Theorems \ref{thm:peertopeerprivacy} and \ref{thm:peertopeerutility} shows that if each cluster has minimum size $s \ge \frac{1}{\phi \epsilon^2}$ and $\eps < 1$ then the peer-to-peer mechanism satisfies $(\eps, \frac{1}{s^2})$-label differential privacy and has excess risk $\tO\left(\sqrt{\frac{d}{\eps n}} + \phi\right)$. This is worse than the $\tO\left(\sqrt{\frac{d}{\eps n}}\right)$ convergence rate obtained by \citet{beimel2013private}, but our peer-to-peer mechanism is significantly more practical, since it only consists of label flipping and empirical risk minimization, instead of requiring the exponential mechanism to be run on $\Omega(2^d)$ hypotheses. Our mechanism also does not require a central curator.

\subsection{Comparison to the shuffle model}
\label{sec:shuffle}

The shuffle model \citep{cheu2019distributed, erlingsson2019amplification} involves (at least) two servers: a curator and a shuffler. Typically, each user applies a local randomizer to her data, encrypts the noisy data using the curator's public key, and sends the encrypted data to the shuffler. The shuffler strips identifiers from the messages it receives and randomly permutes them, then forwards the messages to the curator, who decrypts them.

The major benefit of the shuffle model is that the privacy provided by the local randomizers is amplified by the shuffling procedure and increases with the number of users. However, if the curator and shuffler collude with one another, then this privacy amplification property is invalidated. In real-world implementations of the shuffle model (\emph{e.g.}, RAPPOR \citep{erlingsson2014rappor}) both servers are operated by the same entity thus limiting the privacy benefits. %large technology companies whom users are not inclined to trust. So the privacy benefits of splitting the trust among several servers versus only one are questionable.

By contrast, in our peer-to-peer model, each user receives an unencryted message from exactly one other user, and the learner need not be trusted by any user for privacy amplification to be achieved.

Of course, the peer-to-peer model has its own limitations. Unlike in the shuffle model, we have not shown a privacy amplification result that applies to any local randomizer, but only to simple label flipping. Also, each user observes the noisy label of another user, and the privacy of this label is not amplified. Indeed, it is straightforward to show that, from the perspective of each user, Algorithm \ref{alg:peertopeeruser} only satisfies $\log(\frac{1 - \alpha}{\alpha})$-label differential privacy, as well as only $(0, \frac1s)$-label differential privacy. However,  the amount of data observed by any single user is minuscule (\emph{i.e.}, a single bit).

One could implement shuffling in our peer-to-peer model by having all users in each cluster agree on a random permutation of the users, and then have each user request the noisy label of the user they are mapped to by the permutation. However, agreeing on a random permutation (say, by agreeing on a pseudorandom seed) would itself require a cryptographic protocol (such as key-agreement protocol \citep{merkle1978secure}), since the permutation must be kept secret from the learner.

Since Algorithm \ref{alg:peertopeeruser} involves a subsampling step, it is tempting to ask whether we could achieve privacy amplification in the peer-to-peer model by label flipping and subsampling alone, without exchanging messages among users. It is straightforward to see that this will not work. Since only the labels of the dataset are private, any subsampling applies to the labels only, so the learner can always construct a complete dataset in which some of the labels are replaced with $\bot$, indicating that a label was not provided by the user.  So a mechanism in which some users drop their label, but do not communicate with other users, is equivalent to randomized response on the set $\{0, 1, \bot\}$. Essentially, amplification by subsampling is only effective when users can completely remove themselves from the dataset, but this isn't possible when only the users' labels are private.

\section{Lower bound}
\label{sec:lower}
%On the negative side, we intend to show the necessity of finding \emph{large} high quality clusters.
Note that, in Theorem \ref{thm:centralizedutility}, even if we assume that we have perfect clusters (i.e. $\phi=0$) of size $s$, then by setting $\beta=0$, $\lambda=1$, $\tau=0$, and $\sigma=\frac 1 {\eps}$, we have an $\eps$-label differentially private mechanism with an excess risk of $\tO\left(\sqrt{\frac{d}{n}} + \frac{K}{s\eps}\right)$. In other words, we have the optimal non-private convergence rate plus a residual term $\frac{K}{s\eps}$, which is $\Theta(1)$ for $s=K$. In this section we motivate this relationship between the size of clusters and the number of labels.
We fix a basic learning task and show that, for any constant $\eps$ it is not possible to learn a nontrivial $\eps$-label differentially private model, when the size of high quality clusters are small. In fact, our result holds in a simpler yet relevant setting where we have access to the whole label distribution statically. This motivates the necessity of having large high quality clusters in our positive result when the number of labels is large.

We first define our learning task.
%In fact our result holds even when we have access to the whole label distribution statically. 

\begin{defn}{Label Association Problem (LAP):}\\
\textbf{Setup:} We have a dataset $D\in (\CX \times \CY)^n$, where each example $x$ appears with only one label $y$. $\CC$ is a partitioning of the data in $D$ and size of each cluster $c\in \CC$ is exactly $s$.\\
\textbf{Task:} For each cluster $c\in\CC$ we intend to learn the set of labels that are associated with examples in $c$, denoted as $\CY_c=\{y ~|~ \exists x \in \CX \text{ s.t. } c_x = c \wedge (x,y)\in D \}$. 
\end{defn}

Let $M$ be a label differentially private mechanism for LAP. $\tD=M(D)$ is a set of pairs $(c,y)$. We interpret $\tD$ as a binary classification, where the input is $(c,y)$ and the output is $1$ if $(c,y)\in \tD$. We use precision and recall defined as follows to measure the accuracy of model $\tD$. We have
\begin{align*}
\text{Precision}=  \frac{\sum_{c \in \CC}\sum_{y\in\CY_c} \E_{M}[\tD(c,y)] }
{\E_M[|\tD|]} \qquad
\text{Recall}=  \frac{\sum_{c \in \CC}\sum_{y\in\CY_c} \E_{M}[\tD(c,y)] }
{\sum_{c \in \CC}|\CY_c|}, 
\end{align*}
where the expectations are over the randomness of the mechanism $M$.
Note that without differential privacy, this problem can be learned with precision $1$ and recall $1$.

The next theorem states our main hardness result, and the proof is in the supplementary material. Our proof defines a randomized process that generates two neighboring datasets $D$ and $D'$. Then we fix an arbitrary $\eps$-label differentially private mechanism $M$ and show that if $\eps$ is a constant either recall of $M$ on $D'$ is sub-constant or precision of $M$ on $D$ is sub-constant. We do this by analysing the probability that, the label in $D'$ that is not in $D$, is preserved by mechanism $M$. If such probability is small then the the recall of $M(D')$ is small, if it is large the precision of $M(D)$ is small.
\begin{thm}\label{thm:hard}
When $s\leq o(K)$, it is impossible to have an $\eps$-differential privacy mechanism $M$ for LAP with a constant $\eps$, that guarantees a constant precision and a constant recall. 
\end{thm}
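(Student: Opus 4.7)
The plan is to exhibit a randomized pair of label-neighbor datasets $(D, D')$ and exploit the symmetry of the construction together with $\eps$-label differential privacy to show that any mechanism achieving constant precision and recall forces $s = \Omega(K)$, which contradicts $s \leq o(K)$. Concretely, I would sample $D$ uniformly from the partition datasets on $n = Cs$ examples with $C = K/s$ clusters of size $s$, where the label sets $\{\CY_c^D\}_{c \in \CC}$ form a uniformly random partition of $\CY$ into $C$ blocks of size $s$ and the $s$ examples of each cluster carry those $s$ distinct labels. I would then pick two distinct clusters $c_1, c_2$ and two labels $y_a \in \CY_{c_1}^D$, $y_b \in \CY_{c_2}^D$ uniformly at random, and define $D'$ by swapping: one example in $c_1$ is relabeled from $y_a$ to $y_b$, and one example in $c_2$ from $y_b$ to $y_a$. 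By symmetry, $D'$ is again a uniformly random partition dataset, and $D$ and $D'$ are label-$2$-neighbors.

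Next, define $p^+ = \Pr[M(D) \ni (c, y) \mid y \in \CY_c^D]$ and $p^- = \Pr[M(D) \ni (c, y) \mid y \notin \CY_c^D]$, where the probabilities are taken over random $D$, random $(c, y)$, and the mechanism's coins. Invariance of the construction under permutations of $\CY$ and $\CC$ makes both well defined. A short calculation then gives $\E_D[\text{Recall}(M, D)] = p^+$, and since the precision assumption $T/(T+F) \geq p_0$ on every realization lifts to $\E_D[T]/\E_D[T+F] \geq p_0$, also $p^- \leq \frac{s(1-p_0)}{(K-s)\, p_0}\, p^+$. By the same symmetry, the ``preservation probability'' mentioned in the hint, $\E[\Pr[M(D') \ni (c_1, y_b)]]$, equals $p^+$ (since $y_b \in \CY_{c_1}^{D'}$), while $\E[\Pr[M(D) \ni (c_1, y_b)]] = p^-$ (since $y_b \notin \CY_{c_1}^D$). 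This captures the paper's dichotomy: $p^+$ small means average recall is small, while $p^+$ large forces (via DP) $p^-$ and hence the average false-positive mass to be large.

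Finally, applying $\eps$-label differential privacy to the $2$-neighbors $D, D'$ gives $\Pr[M(D') \ni (c_1, y_b)] \leq e^{2\eps}\, \Pr[M(D) \ni (c_1, y_b)]$ pointwise, and taking expectations yields $p^+ \leq e^{2\eps}\, p^-$. Chaining with the precision bound produces $p^+ \leq e^{2\eps} \cdot \frac{s(1-p_0)}{(K-s)\, p_0}\, p^+$, i.e., $s \geq \frac{(K-s)\, p_0}{e^{2\eps}(1-p_0)} = \Omega(K)$ for constant $\eps$ and $p_0$, contradicting $s \leq o(K)$. The main obstacle is justifying the collapse of $\Pr[M(D) \ni (c, y)]$ to the two values $(p^+, p^-)$: for a fixed adversarial mechanism the probability can depend on $(c, y)$ arbitrarily, and it is only after averaging against a uniformly random partition $D$ that the transitivity of the automorphism group on each of the two orbits makes it constant. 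An equivalent route is to symmetrize $M$ by pre- and post-composing with a uniformly random permutation of $\CY$ and $\CC$ (which preserves $\eps$-DP and the precision/recall lower bounds) and then argue the two-value property for any one partition dataset.
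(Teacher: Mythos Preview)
Your argument is correct and follows the same strategy as the paper's: build a random pair of label-neighboring datasets, apply $\eps$-label DP to transfer the probability that the mechanism outputs a particular pair from $D'$ to $D$, and combine with the precision and recall guarantees to force $s=\Omega(K)$. Your final inequality $(K-s)\,p_0 \le e^{2\eps}(1-p_0)\,s$ is the same as the paper's $\phi\,\eta\,e^{-\eps} \le s/(K-s)$ up to the constant in the exponent.

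The constructions differ in detail. The paper draws each cluster's $s$ labels independently (overlaps across clusters allowed, any number of clusters) and forms $D'$ by resampling a \emph{single} label, giving true $1$-neighbors and an $e^\eps$ factor; it then argues directly from the recall bound on $D'$ together with uniformity of the modified index $i$. You instead insist that the cluster label sets partition $\CY$ (which forces $C=K/s$ and tacitly $s\mid K$) and form $D'$ by a two-label swap; this costs you $e^{2\eps}$ but buys the property that $D'$ has the same law as $D$, which is exactly what makes your identification $\E[\Pr[M(D')\ni(c_1,y_b)]]=p^+$ immediate. Your explicit two-orbit reduction to $(p^+,p^-)$, and the alternative of symmetrizing $M$ by a random $S_K\times S_C$ action, handle the averaging issue you flag more carefully than the paper does; in exchange the paper's version is slightly tighter in the constant and does not need the divisibility assumption.
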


\section{Experiments}
\label{sec:experiments}

%We evaulated three mechanisms: (1) {\tt UniformRR}, which is the centralized mechanism (Algorithm \ref{alg:centralcurator}) with parameters set according to Corollary \ref{thm:centralizedutilitynobias}; (2) {\tt ClusterRR}, which is the centralized mechanism (Algorithm \ref{alg:centralcurator}) with parameters set according to Corollary \ref{thm:centralizedutilitybias}; (3)  to several training sets derived from MNIST training data \citep{}. First, we used the {\tt sklearn.cluster.KMeans} package (with default parameters) to learn 100 clusters on the unlabeled training set. Next we used the centralized mechanism () and the clusters to make the training labels private. We also used uniform randomized response (which is equivalent to the centralized mechanism with parameters taken from Corollary \ref{}) to make the training labels private. For each private training set, we used the {\tt sklearn.linear model.SGDClassifier package} (with default parameters) to learn a classifier on the modified loss function output by the mechanism, and evaluated the classifier's performance with respect to accuracy on MNIST test data. The results, which are shown in Figure \ref{}, demonstrate that cluster-based randomized response leads to much higher quality models for the same value of $\epsilon$, especially when $\epsilon$ is very small, than uniform randomized response.

We evaluated the following mechanisms on the MNIST \citep{lecun-mnisthandwrittendigit-2010}, Fashion-MNIST \citep{xiao2017fashionmnist} and CIFAR-10 \citep{Krizhevsky09learningmultiple} datasets:
\begin{itemize}
    \item {\tt UniformRR}: Algorithm 1 with parameters set according to Corollary 1.
    \item {\tt ClusterRR}: Algorithm 1 with parameters set according to Corollary 2.
    \item {\tt DP-SGD}: Differentially-private variant of SGD \citep{abadi2016deep}.
\end{itemize}
For the {\tt ClusterRR} mechanism we learned 100 clusters on each unlabeled training set using the {\tt sklearn.cluster.KMeans} package (with default parameters). For both randomized response mechanisms we used the {\tt sklearn.linear\textunderscore model.LogisticRegression} package (set to `multinomial' and using the `SAGA' solver) to learn a classifer on the noisy training set output by the mechanism. For {\tt DP-SGD} we learned a logistic regression model by adapting the implementation from the TensorFlow Privacy library \citep{TFPrivacy}. We varied the noise added to the gradients, and for each noise level computed $\epsilon$ using the privacy-by-iteration method \citep{feldman2018privacy} with $\delta = 1 / n$, where $n$ is the training set size.

For each mechanism and each dataset we evaluated the learned classifer's accuracy on the test set. See the first three panels of Figure \ref{fig:results} for results, where each data point is the average of 5 trials, and each y-axis is normalized, \emph{i.e.}, divided by the accuracy of the non-private classifier that is learned on the original training set. Observe that {\tt ClusterRR} outperforms both {\tt UniformRR} and {\tt DP-SGD} on each dataset for a wide range of the privacy parameter $\epsilon$.

We also assessed the importance of a good clustering for {\tt ClusterRR} by fixing the privacy parameter $\epsilon = 0.5$ and varying the number of clusters. See the last panel of Figure \ref{fig:results}, which shows that the performance of {\tt ClusterRR} degrades sharply when the number of clusters is very small, since in that case the clusters are quite heterogeneous.

\begin{figure}
    \centering
        \includegraphics[scale=0.18]{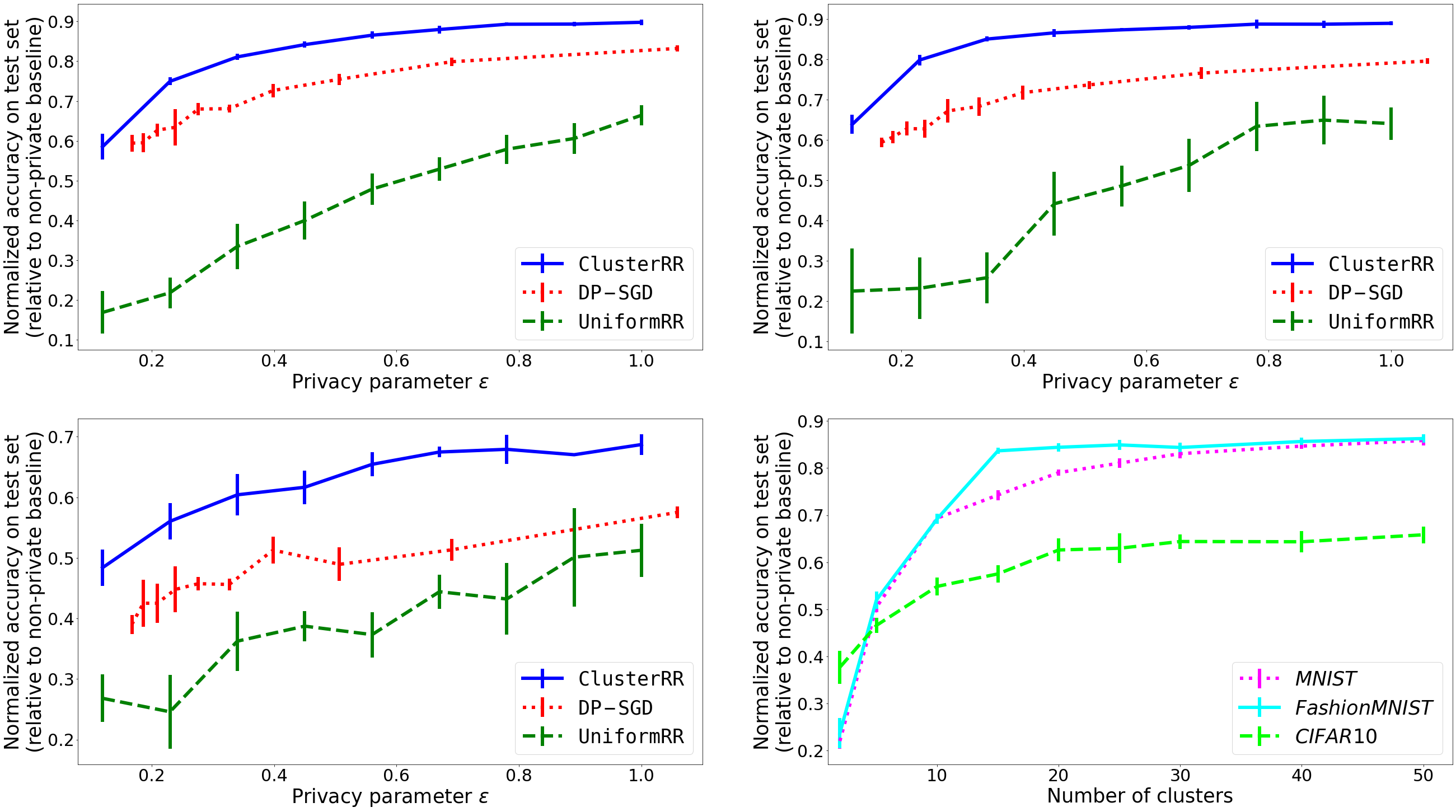}
        \caption{\label{fig:results} Performace of each mechanism on the MNIST ({\it top left}), Fashion-MNIST ({\it top right}) and CIFAR-10 ({\it bottom left}) datasets. Performance of the {\tt ClusterRR} mechanism (for $\epsilon = 0.5$) on each dataset when varying the number of clusters ({\it bottom right}). }
\end{figure}

\section{Conclusion}
In this work we presented centralized and distributed label differential privacy mechanisms. Our mechanisms are based on a clustering of examples in the training set. We upper bound the excess risk of our mechanisms by a rate comparable to that of non-private learning, especially when the clusters are both large and high-quality. We complement our results with a lower bound that illustrates why it is hard to learn privately when we do not have large high-quality clusters. We also present experimental results on real data showing that our proposed mechanisms outperform existing mechanisms for differentially private learning.

Our mechanisms are designed to protect the privacy of labels in training data, and we expect that users would feel safer in a world where more learning algorithms were privacy-preserving. However, a potential risk of the widespread adoption of our mechanisms would be their misapplication to settings where the features are also private, since that would give users a false sense of security.

\bibliography{label_dp}
\bibliographystyle{unsrtnat}

\appendix

\section{Analysis of the centralized mechanism}

\subsection{Label randomization matrices}

We first establish properties of the label randomization matrices $\tbQ_{x, \beta} \in \bbR^{K \times K}$ defined by the centralized mechanism (see line \ref{line:labelrandomizationmatrix} of Algorithm \ref{alg:centralcurator}).

\begin{lem} \label{lem:singular} The minimum singular value of any label randomization matrix $\tbQ_{x, \beta}$ is at least $\frac{1 - \beta}{\sqrt{2K}}$.\end{lem}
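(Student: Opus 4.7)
The plan is to exploit the rank-one structure of the label randomization matrix. The definition on line \ref{line:labelrandomizationmatrix} can be rewritten as $\tbQ_{x,\beta} = (1-\beta)\bI + \beta\, \tbq\, \bone^T$, where $\tbq \in \bbR^K$ is the column vector with $y'$th entry $\tq(y' \mid c_x)$. Since $\tbq$ is a probability distribution, $\bone^T \tbq = 1$ and $\|\tbq\|^2 \leq 1$. I will lower bound $\sigma_{\min}(\tbQ_{x,\beta}) = \min_{\|v\|=1}\|\tbQ_{x,\beta} v\|$ directly, since the naive triangle inequality is vacuous: $\|\tbq\bone^T\|_{\mathrm{op}} = \|\tbq\|\sqrt{K}$ can be as large as $\sqrt{K}$, which swamps $1-\beta$ for large $K$.

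The key idea is to decompose vectors along $\mathrm{span}(\bone)$ and its orthogonal complement. Write $v = (s/K)\bone + w$ with $w \perp \bone$ and $s := \bone^T v$, and $\tbq = (1/K)\bone + \tbq_\perp$ with $\tbq_\perp \perp \bone$. A direct computation gives $\tbQ_{x,\beta} v = (s/K)\bone + \bigl((1-\beta)w + \beta s\, \tbq_\perp\bigr)$; the two summands are orthogonal, so Pythagoras together with Cauchy-Schwarz on the cross term $w^T \tbq_\perp$ and the identity $\|w\|^2 = 1 - s^2/K$ yields
\[
\|\tbQ_{x,\beta} v\|^2 \geq \tfrac{s^2}{K} + \Bigl((1-\beta)\sqrt{1-s^2/K} - \beta |s|\,\|\tbq_\perp\|\Bigr)^2.
\]

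Introduce $u := |s|/\sqrt{K} \in [0,1]$ and $q := \sqrt{K}\|\tbq_\perp\|$, which satisfies $q^2 \leq K-1$ because $\|\tbq\|^2 = 1/K + \|\tbq_\perp\|^2 \leq 1$. The task reduces to lower bounding $u^2 + \bigl((1-\beta)\sqrt{1-u^2} - \beta q u\bigr)^2$ over $u \in [0,1]$. Setting $u = \sin\theta$ and $\sqrt{1-u^2} = \cos\theta$ realizes this as the squared Euclidean norm of $\sin\theta\,(1,-\beta q) + \cos\theta\,(0,1-\beta) \in \bbR^2$, which is the quadratic form of the $2\times 2$ Gram matrix $G$ of the two vectors $(1,-\beta q)$ and $(0,1-\beta)$ evaluated at the unit vector $(\sin\theta,\cos\theta)$. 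Its minimum is at least $\lambda_{\min}(G) \geq \det(G)/\mathrm{tr}(G)$. A short computation gives $\det(G) = (1-\beta)^2$ and $\mathrm{tr}(G) = 1 + \beta^2 q^2 + (1-\beta)^2 \leq 2K$ for $\beta \in [0,1]$ and $K \geq 1$ (using $q^2 \leq K-1$), yielding $\sigma_{\min}(\tbQ_{x,\beta})^2 \geq (1-\beta)^2/(2K)$ as claimed.

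The main obstacle is the last optimization step. The structural fact that rescues the bound is that the rank-one perturbation $\tbq \bone^T$ has one-dimensional row space $\mathrm{span}(\bone)$, so the nontrivial action of $\tbQ_{x,\beta}$ occurs essentially in two dimensions; the key cancellation is the Gram determinant $\det(G) = (1-\beta)^2$, which carries no factor of $K$ even though each individual entry of $G$ can grow with $K$.
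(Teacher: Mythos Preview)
Your proof is correct and takes a genuinely different route from the paper's. The paper proceeds by a two-case elementary argument on the sign pattern of the coordinates of a unit vector $\bv$: if all $v_i$ share a sign, then the scalar $\beta\,\tbq^\top\bv$ has that same sign, so adding it to each coordinate of $(1-\beta)\bv$ only increases magnitudes, giving $\norm{\tbQ\bv}\ge(1-\beta)/\sqrt{K}$ directly; if instead $\min_i v_i<0<\max_i v_i$, the paper keeps only the two coordinates achieving the extremes, treats $\beta\,\tbq^\top\bv/(1-\beta)$ as a free offset $x$ to be minimized over, and uses $\max_i v_i-\min_i v_i\ge 1/\sqrt{K}$ to conclude. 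Your approach is more structural: you decompose both $\bv$ and $\tbq$ along $\mathrm{span}(\bone)$ and its orthogonal complement, which collapses the problem to a $2\times 2$ Gram matrix whose determinant is exactly $(1-\beta)^2$ and whose trace is at most $K+1\le 2K$, so that $\lambda_{\min}\ge\det/\mathrm{tr}$ finishes. The paper's argument is lower-tech (no Gram matrices or eigenvalue inequalities) but requires the case split and, in the mixed-sign case, discards all but two coordinates; your argument avoids cases, makes transparent why the numerator is $K$-free (the $\beta^2q^2$ term cancels in the Gram determinant), and yields the slightly sharper trace bound $K+1$ in place of $2K$.
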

\begin{proof} For brevity, we drop subscripts and conditioning on $x$, letting $\bQ = \tbQ_{x, \beta}$ and $\bq = \tbq(\cdot | c_x)$ Let $\bv=<v_1,\dots,v_K>$ be an arbitrary vector such that $\|\bv\|_2=1$.  Let $\bu=<u_1,\dots,u_K>= \bQ \bv$. Note that for all $i$ we have $u_i=(1-\beta)v_{i}+\beta \bq^\top \bv$. We prove this lemma in two cases: First, all $v_i$s have the same sign. Second, there exists a $v_i$ which is negative and a $v_j$ which is positive. 

\paragraph{Case 1: All $v_i$s have the same sign.} Note that we have 
\begin{align*}
    \|\bu\|_2 &= \sqrt{\sum_{i=1}^{K} \big((1-\beta)v_{i}+\beta \bq^\top \bv\big)^2}\\
    &\geq \sqrt{\sum_{i=1}^{K} \big((1-\beta)v_{i}\big)^2} &\text{same sign}\\%&\text{since $(1-\beta)v_{i}$ and $\beta q v$ have the same sign.}\\
    &\geq \sqrt{\max_{i=1}^{K} \big((1-\beta)v_{i}\big)^2} \\
    &\geq \sqrt{\max_{i=1}^{K} \big((1-\beta)\frac 1 {\sqrt K}\big)^2} &\text{since $\sum_{i=1}^{K} v_i^2 = 1$}\\
    &= \frac{1-\beta}{\sqrt K}.
\end{align*}
\paragraph{Case 2: There exists a $v_i$ which is negative and a $v_j$ which is positive.} Note that we have 
\begin{align*}
    &\|\bu\|_2 = \\
    &\sqrt{\sum_{i=1}^{K} \big((1-\beta)v_{i}+\beta \bq^\top \bv\big)^2}\geq\\
    & \sqrt{ \big((1-\beta)\min_i v_{i}+\beta \bq^\top \bv\big)^2 + \big((1-\beta)\max_i v_{i}+\beta \bq^\top \bv\big)^2}\\
    &= (1-\beta)\sqrt{ \big(\min_i v_{i}+\frac{\beta \bq^\top \bv}{1-\beta}\big)^2 + \big(\max_i v_{i}+\frac{\beta \bq^\top \bv}{1-\beta}\big)^2}\\
    &\geq (1-\beta)\min_x \sqrt{ \big(\min_i v_{i}+x\big)^2 + \big(\max_i v_{i}+x\big)^2}=\\
    &  (1-\beta)\sqrt{ \big(\frac {\min_i v_i - \max_i v_i}{2}\big)^2 + \big(\frac {\max_i v_i - \min_i v_i}{2}\big)^2}\\
    &=  (1-\beta)\sqrt{ \big(\frac {1}{2\sqrt K}\big)^2 + \big(\frac {1}{2\sqrt K}\big)^2} \hspace{1.5cm}\\%\text{$\sum_{i=1}^{K} v_i^2 = 1$}\\
    &= \frac{1-\beta}{\sqrt {2 K}}. \qedhere
\end{align*}
\end{proof}

\begin{lem} \label{thm:absconv} Each label randomization matrix $\tbQ_{x, \beta}$ satisfies $\max_y \sum_{y'} \left|\tQ^{-1}_{x, \beta}[y', y]\right| \le \frac{\sqrt{2}K}{1 - \beta}$.\end{lem}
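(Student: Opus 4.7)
The plan is to leverage Lemma~\ref{lem:singular}, which lower bounds the minimum singular value of $\tbQ_{x,\beta}$ by $\frac{1-\beta}{\sqrt{2K}}$, and convert this spectral statement about the inverse into the required column $\ell_1$ bound.

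First I would observe that $\max_y \sum_{y'} |\tQ^{-1}_{x,\beta}[y',y]|$ is the maximum $\ell_1$-norm taken over the columns of $\tbQ_{x,\beta}^{-1}$. Writing $a_y = \tbQ_{x,\beta}^{-1} e_y$ for the $y$-th column, we have $\|a_y\|_2 \le \|\tbQ_{x,\beta}^{-1}\|_2 \|e_y\|_2 = \|\tbQ_{x,\beta}^{-1}\|_2$, where $\|\cdot\|_2$ denotes spectral norm. Since the spectral norm of the inverse equals the reciprocal of the smallest singular value of $\tbQ_{x,\beta}$, Lemma~\ref{lem:singular} immediately yields $\|\tbQ_{x,\beta}^{-1}\|_2 \le \frac{\sqrt{2K}}{1-\beta}$, and hence $\|a_y\|_2 \le \frac{\sqrt{2K}}{1-\beta}$ for every $y$.

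Next I would pass from the $\ell_2$ bound to the $\ell_1$ bound via the elementary Cauchy--Schwarz inequality $\|v\|_1 \le \sqrt{K}\,\|v\|_2$ for any $v \in \bbR^K$. Applied to each column, this gives $\sum_{y'} |\tQ^{-1}_{x,\beta}[y',y]| = \|a_y\|_1 \le \sqrt{K}\,\|a_y\|_2 \le \sqrt{K} \cdot \frac{\sqrt{2K}}{1-\beta} = \frac{\sqrt{2}\,K}{1-\beta}$. Maximizing over $y$ finishes the proof.

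I do not expect any real obstacle: the only quantitative subtlety is the $\sqrt{K}$ factor picked up when converting from $\ell_2$ to $\ell_1$ on a $K$-dimensional vector, which is exactly what produces the $\sqrt{2}\,K$ (rather than $\sqrt{2K}$) in the statement. One could alternatively invert $\tbQ_{x,\beta} = (1-\beta)\bI + \beta\, \tbq\,\bone^\top$ explicitly via Sherman--Morrison and compute the column sums directly, but routing through Lemma~\ref{lem:singular} keeps the argument short and reuses the previous lemma, which is presumably why it is stated in this sequence.
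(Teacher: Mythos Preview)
Your proposal is correct and follows essentially the same approach as the paper. The paper's proof is a one-line compression of your argument: it identifies $\max_y \sum_{y'} |\tQ^{-1}_{x,\beta}[y',y]|$ with the induced matrix $1$-norm $\norm{\tbQ_{x,\beta}^{-1}}_1$, invokes the standard matrix-norm inequality $\norm{A}_1 \le \sqrt{K}\,\norm{A}_2$, and then applies Lemma~\ref{lem:singular} to bound $\norm{\tbQ_{x,\beta}^{-1}}_2$; your column-by-column Cauchy--Schwarz step is exactly how that matrix-norm inequality is proved.
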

\begin{proof} By properties of matrix norms we have
\[
\max_y \sum_{y'} \left|\tQ^{-1}_{x, \beta}[y', y]\right| = \norm{\tbQ^{-1}_{x, \beta}}_1 \le \sqrt{K}\norm{\tbQ^{-1}_{x, \beta}}_2 \le \frac{\sqrt{2}K}{1 - \beta}
\]
where the last inequality follows from Lemma \ref{lem:singular}.\end{proof}

\subsection{Well-definedness of centralized mechanism}

\begin{thm}[Well-definedness] \label{thm:welldefined} In Algorithm \ref{alg:centralcurator}, the cluster label distributions $\tbq$ satisfy $\tq(y | c) \in [\tau, 1]$ and $\sum_{y' \in \CY} \tq(y' | c) = 1$ for every label $y \in \CY$ and cluster $c \in \CC$. Also, each label randomization matrix $\bQ_{x, \beta}$ is invertible.\end{thm}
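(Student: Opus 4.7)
The plan is to verify the three claims in turn, treating the renormalization step as the main object of analysis, and then obtain invertibility of $\tbQ_{x,\beta}$ from Lemma~\ref{lem:singular} (or directly, since the matrix is a rank-one perturbation of $(1-\beta)\bI$). All the work lives in the first two claims, since after the clipping step we already know $q(y\mid c)\in[\tau,1]$; the question is whether the affine correction $\tq(y\mid c) \gets q(y\mid c) + \frac{\xi_{y,c}}{\sum_{y'}\xi_{y',c}}\Delta_c$ keeps the entries in that interval while forcing their sum to $1$.

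First I would dispatch the normalization identity. Summing the update over $y$ gives
\[
\sum_y \tq(y\mid c) \;=\; \sum_y q(y\mid c) \;+\; \frac{\sum_y \xi_{y,c}}{\sum_{y'}\xi_{y',c}}\,\Delta_c \;=\; \sum_y q(y\mid c) + \Delta_c \;=\; 1,
\]
so provided the denominator $\sum_{y'}\xi_{y',c}$ is nonzero, this step is automatic. I would therefore first argue that the degenerate case $\sum_{y'}\xi_{y',c}=0$ cannot occur unless $\Delta_c=0$, in which case the update is vacuous: if $\Delta_c\ge 0$ then $\xi_{y,c}=1-q(y\mid c)$ and summing to zero forces every $q(y\mid c)=1$, giving $\Delta_c = 1-K\le 0$; similarly if $\Delta_c<0$ then $\xi_{y,c}=q(y\mid c)-\tau$, and all being zero forces $\sum q = K\tau \le 1$ (using $\tau\le 1/K$), contradicting $\Delta_c<0$.

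Next I would verify $\tq(y\mid c)\in[\tau,1]$ by splitting on the sign of $\Delta_c$. When $\Delta_c\ge 0$, each $\xi_{y,c}=1-q(y\mid c)\ge 0$, so the update adds a nonnegative quantity, giving $\tq(y\mid c)\ge q(y\mid c)\ge\tau$. For the upper bound, $\sum_{y'}\xi_{y',c}=K-\sum_{y'}q(y'\mid c)=K-1+\Delta_c$, hence
\[
\tq(y\mid c) \;=\; q(y\mid c) + \bigl(1-q(y\mid c)\bigr)\cdot\frac{\Delta_c}{K-1+\Delta_c} \;\le\; q(y\mid c) + \bigl(1-q(y\mid c)\bigr) \;=\; 1,
\]
using $\Delta_c\le K-1+\Delta_c$. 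When $\Delta_c<0$, each $\xi_{y,c}=q(y\mid c)-\tau\ge 0$ and the update is nonpositive, so $\tq(y\mid c)\le q(y\mid c)\le 1$. For the lower bound, $\sum_{y'}\xi_{y',c}=(1-\Delta_c)-K\tau\ge -\Delta_c>0$ (again using $K\tau\le 1$), so $|\Delta_c|/\sum_{y'}\xi_{y',c}\le 1$ and
\[
\tq(y\mid c) \;\ge\; q(y\mid c) - \bigl(q(y\mid c)-\tau\bigr) \;=\; \tau.
\]
This is the step I expect to be the main (but still mild) obstacle: getting the denominator bound $\sum\xi_{y,c}\ge -\Delta_c$ to fall out of the $\tau\le 1/K$ hypothesis.

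Finally, for invertibility of $\tbQ_{x,\beta}$: by definition it equals $(1-\beta)\bI + \beta \tbq\bone^\top$ (a rank-one perturbation whose second factor sums to $1$ by the previous two claims), so its eigenvalues are $1-\beta$ with multiplicity $K-1$ and $1$ with multiplicity $1$, all nonzero since $\beta\in[0,1)$. Alternatively, Lemma~\ref{lem:singular} already gives minimum singular value $\ge (1-\beta)/\sqrt{2K}>0$, which suffices.
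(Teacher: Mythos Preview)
Your proposal is correct and follows essentially the same approach as the paper: split on the sign of $\Delta_c$, compute $\sum_{y'}\xi_{y',c}$ explicitly in each case, and use $K\tau\le 1$ (resp.\ $K\ge 1$) to bound the correction factor by $1$; the paper likewise derives invertibility from Lemma~\ref{lem:singular} together with $\beta<1$. Your treatment is in fact slightly more careful than the paper's, since you explicitly handle the degenerate case $\sum_{y'}\xi_{y',c}=0$ (which the paper passes over) and you supply an alternative direct eigenvalue argument for invertibility.
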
 

\begin{proof} To show that $\tq(y | c) \in [\tau, 1]$, first note that clearly $q(y | c) \in [\tau, 1]$, and therefore $\xi_{y, c} \ge 0$. So if $\Delta_c < 0$ then $\tq(y | c) \le 1$ and
\begin{align*}
\tq(y | c) &= q(y | c) + \frac{\xi_{y, c}}{\sum_{y'} \xi_{y', c}} \Delta_c\\
&= \tau + q(y | c) - \tau + (q(y | c) - \tau)\frac{\Delta_c}{\sum_{y'} (q(y' | c) - \tau)}\\
&= \tau + q(y | c) - \tau + (q(y | c) - \tau)\frac{\Delta_c}{1 - \Delta_c - K\tau}\\
&= \tau + q(y | c) - \tau + (\tau - q(y | c))\frac{-\Delta_c}{-\Delta_c + 1 - K\tau}\\
&\ge \tau + q(y | c) - \tau + \tau - q(y | c)\\
&= \tau,
\end{align*}
where we used $1 - K\tau \ge 0$. Similarly, if $\Delta_c \ge 0$ then $\tq(y | c) \ge \tau$ and
\begin{align*}
\tq(y | c) &= q(y | c) + \frac{\xi_{y, c}}{\sum_{y'} \xi_{y', c}} \Delta_c\\
&= 1 + q(y | c) - 1 + (1 - q(y | c))\frac{\Delta_c}{\sum_{y'} (1 - q(y' | c))}\\
&= 1 + q(y | c) - 1 + (1 - q(y | c))\frac{\Delta_c}{\Delta_c + K - 1}\\
&\le 1 + q(y | c) - 1 + 1 - q(y | c)\\
&= 1,
\end{align*}
where we used $K - 1 \ge 0$. Thus $\tq(y | c) \in [\tau, 1]$. Also we have $\sum_{y'} \tq(y' | c)  = 1$ because
\[
\sum_{y'} \tq(y' | c) = \sum_{y'} q(y' | c) + \sum_{y'} \frac{\xi_{y', c}}{\sum_{y''} \xi_{y'', c}} \Delta_c = 1 - \Delta_c + \Delta_c = 1.
\]
Finally, the invertibility of each label randomization matrix $\tbQ_{x, \beta}$ is immediate from Lemma \ref{lem:singular} and the fact that $\beta < 1$.\end{proof}

\subsection{Proof of Theorem \ref{thm:centralizedprivacy}}

\begin{proof} Let $M$ be the mechanism in Algorithm \ref{alg:centralcurator}. We can write $M$ as the composition of two mechanisms, $M_1$ and $M_2$, with $M(D) = M_2(D, M_1(D))$, where $M_1(D)$ outputs the noisy cluster label distributions $\tbq$, and $M_2(D, \tbq)$ uses $\tbq$ to resample the labels in $D$ to form $\tD$ and constructs the modified loss function $\tell$. By sequential composition and post-processing, if $M_1$ and $M_2$ are $\eps_1$- and $\eps_2$-differentially private, respectively, then $M$ is $(\eps_1 + \eps_2)$-differentially private.

Note that after adding $z_{y, c}$ to each $\hp_{y | c}(D)$, mechanism $M_1$ does not access dataset $D$ again. Since each $\hp_{y | c}(D)$ is computed using a disjoint subset of the dataset and has sensitivity $1 / n_c(D)$, and the scale of Laplace random variable $z_{y, c}$ is $\sigma / n_c(D)$, mechanism $M_1$ is $(1 / \sigma)$-differentially private.

Mechanism $M_2$ is just randomized response per label, using $q(\cdot | c_{x_i})$ as the random label distribution for each labeled example $(x_i, y_i)$, followed by post-processing. Thus $M_2$ is $\log(1 + (1 - \lambda) / \lambda\tau)$-differentially private, since for all $y \in \CY$ we have
\[
\frac{\Pr[\ty_i = y ~|~ y_i = y]}{\Pr[\ty_i = y ~|~ y_i \neq y]} = \frac{1 - \lambda +  \lambda \tq(y | c_{x_i}))}{\lambda \tq(y | c_{x_i})} \le 1 + \frac{1 - \lambda}{\lambda\tau}.
\]\end{proof}

\subsection{Proof of Theorem \ref{thm:centralizedutility}}

Fix threshold $\tau$ and noise scale $\sigma$. We write $(\bx, \by, \rby, \tby, \bz) \sim \CP^n_{\beta, \lambda}$ to denote the following joint distribution: Draw $(\bx, \by) = ((x_1, y_1), \ldots, (x_n, y_n)) \sim \CP^n$, run Algorithm \ref{alg:centralcurator} on input dataset $D = (\bx, \by)$ with bias correction parameter $\beta$ and label flipping probability $\lambda$, and let $\rby$, $\tby$, $\bz$ be the vectors of variables $z_{y, c}$, $\ry_i$, $\ty_i$, respectively, defined in the algorithm. Note that Algorithm \ref{alg:centralcurator} is deterministic if $(\bx, \by, \rby, \tby, \bz)$ is fixed.

Let $\hR_{\beta, \lambda}(h) = \frac1n \sum_{(x, y) \in \tD} \tell(h, x, y)$ be the empirical loss of $h$ with respect to the loss function and dataset output by Algorithm \ref{alg:centralcurator} when $(\bx, \by, \rby, \tby, \bz) \sim \CP^n_{\beta, \lambda}$, and let $R_{\beta, \lambda}(h) = \E_{\bx, \by, \rby, \tby, \bz}[\hR_{\beta, \lambda}(h)]$.

\begin{lem}[Unbiasedness of modified loss] $R_{\beta, \beta}(h) = R(h)$ for any hypothesis $h \in \CH$.\end{lem}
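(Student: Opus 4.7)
The plan is to show that when $\lambda = \beta$, the expectation of the modified loss $\tell$ on a noisy label is exactly the original loss on the true label. The key observation is that the conditional distribution of $\ty_i$ given $y_i$, $x_i$, and $\tbq$ is governed by precisely the matrix $\tbQ_{x_i, \beta}$: from the label-randomization step of Algorithm \ref{alg:centralcurator},
\[
\Pr[\ty_i = y' \mid y_i = y, x_i, \tbq] = (1 - \lambda)\indic{y' = y} + \lambda \tq(y' \mid c_{x_i}),
\]
which equals $\tQ_{x_i, \beta}[y', y]$ exactly when $\lambda = \beta$.

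Next, I would compute the conditional expectation of $\tell(h, x_i, \ty_i)$ given $y_i$, $x_i$, and $\tbq$. Unfolding the definition of $\tell$ from line \ref{line:modifiedlossfunction} and swapping the order of summation gives
\[
\E[\tell(h, x_i, \ty_i) \mid y_i = y, x_i, \tbq]
= \sum_{y'} \tQ_{x_i, \beta}[y', y] \sum_{y''} \tQ^{-1}_{x_i, \beta}[y'', y'] \ell(h, x_i, y'')
= \sum_{y''} \ell(h, x_i, y'') \, (\tbQ^{-1}_{x_i, \beta} \tbQ_{x_i, \beta})[y'', y].
\]
By Theorem \ref{thm:welldefined} the matrix $\tbQ_{x_i, \beta}$ is invertible, so the inner factor is $\indic{y'' = y}$, and the whole expression collapses to $\ell(h, x_i, y)$.

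Finally, I would take expectations over $(x_i, y_i) \sim \CP$ and $\tbq$ (produced by the first stage of Algorithm \ref{alg:centralcurator} from $D$ and the Laplace noise). Since the conditional expectation equals $\ell(h, x_i, y_i)$ independently of $\tbq$, the tower property yields
\[
\E_{\bx, \by, \rby, \tby, \bz}[\tell(h, x_i, \ty_i)] = \E_{(x_i, y_i) \sim \CP}[\ell(h, x_i, y_i)] = R(h)
\]
for each $i$, and averaging over $i$ gives $R_{\beta, \beta}(h) = R(h)$.

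The only nontrivial ingredient is the matrix-inverse cancellation in the middle step, and this is already enabled by the well-definedness result (Theorem \ref{thm:welldefined}) guaranteeing invertibility of $\tbQ_{x, \beta}$; so I do not expect any real obstacle. The statement is essentially the debiasing identity of \citet{natarajan2013learning} generalized to $K$ classes, with the crucial alignment being $\lambda = \beta$ so that the sampling matrix and the inversion matrix match.
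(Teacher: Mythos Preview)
Your proposal is correct and follows essentially the same approach as the paper: both identify that $\Pr[\ty_i = y' \mid y_i, x_i, \tbq] = \tQ_{x_i,\beta}[y',y_i]$ when $\lambda=\beta$, then use the cancellation $\tbQ^{-1}_{x_i,\beta}\tbQ_{x_i,\beta}=\bI$ to reduce the expected modified loss to $\ell(h,x_i,y_i)$, and finally take the outer expectation to obtain $R(h)$. Your conditional-expectation/tower-property presentation is a slightly cleaner repackaging of the paper's explicit indicator-sum computation, but the argument is the same.
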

\begin{proof} Let $(\bx, \by, \rby, \tby, \bz) \sim \CP^n_{\beta, \beta}$. We have 
\begin{align}
R_{\beta, \beta}(h) &= \E_{\bx, \by, \rby, \tby, \bz}[\hR_{\beta, \beta}(h)] \notag\\
&= \E_{\bx, \by, \rby, \tby, \bz}\left[\frac1n \sum_i \tell(h, x_i, \ty_i)\right] \notag\\
&= \E_{\bx, \by, \rby, \tby, \bz}\left[\frac1n \sum_i \sum_{y'} \tQ^{-1}_{x_i, \beta}[y', \ty_i] \ell(h, x_i, y')\right] \notag\\
&= \E_{\bx, \by, \rby, \tby, \bz}\left[\frac1n \sum_i \sum_{y'} \sum_y \indic{\ty_i = y} \tQ^{-1}_{x_i, \beta}[y', y] \ell(h, x_i, y')\right] \notag\\
&= \E_{\bx, \by, \rby, \bz}\left[\frac1n \sum_i \sum_{y'} \sum_y \tQ_{x_i, \beta}[y, y_i] \tQ^{-1}_{x_i, \beta}[y', y] \ell(h, x_i, y')\right] \label{eq:Q}\\
&= \E_{\bx, \by, \rby, \bz}\left[\frac1n \sum_i \sum_{y'} \ell(h, x_i, y') \sum_y \tQ^{-1}_{x_i, \beta}[y', y] \tQ_{x_i, \beta}[y, y_i]\right] \notag\\
&= \E_{\bx, \by}\left[\frac1n \sum_i \ell(h, x_i, y_i)\right] \label{eq:cancel}\\
&= R(h) \notag
\end{align}
where Eq.~\eqref{eq:Q} follows from the definition of $\tbQ_{x, \beta}$ in Algorithm \ref{alg:centralcurator} (see line \ref{line:labelrandomizationmatrix}, and recall that in this case $\beta = \lambda$). We establish Eq.~\eqref{eq:cancel} by letting $\bM = \tbQ^{-1}_{x_i, \beta} \tbQ_{x_i, \beta}$ and noting
\[
M[y', y_i] = \sum_{y} \tQ^{-1}_{x_i, \beta}[y', y] \tQ_{x_i, \beta}[y, y_i]
\]
and $\bM = \bI$, and therefore $M[y', y_i] = 1$ if $y' = y_i$ and $M[y', y_i] = 0$ otherwise.\end{proof}

\begin{lem}[Boundedness of modified loss] \label{thm:boundedness} $\max_{h, x, y} \left \lvert \tell(h, x, y) \right \rvert \le \frac{\sqrt{2}K}{1 - \beta}$.\end{lem}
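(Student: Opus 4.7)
The plan is to derive the bound directly from the definition of $\tell$ and the preceding lemma on $\ell^1$ column norms of inverse randomization matrices.

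First, I would write out the definition from line \ref{line:modifiedlossfunction} of Algorithm \ref{alg:centralcurator}, namely
\[
\tell(h, x, y) = \sum_{y'} \tQ^{-1}_{x, \beta}[y', y]\, \ell(h, x, y').
\]
Then I would apply the triangle inequality and factor out the bound $|\ell(h, x, y')| \le 1$ that comes from the assumption $\ell : \CH \times \CX \times \CY \to [0,1]$ in the Preliminaries. This gives
\[
|\tell(h, x, y)| \le \sum_{y'} \bigl|\tQ^{-1}_{x, \beta}[y', y]\bigr| \cdot |\ell(h, x, y')| \le \sum_{y'} \bigl|\tQ^{-1}_{x, \beta}[y', y]\bigr|.
\]

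Next, I would invoke Lemma \ref{thm:absconv} (already proved in the excerpt via $\norm{\cdot}_1$–$\norm{\cdot}_2$ comparison together with the singular value bound of Lemma \ref{lem:singular}), which states
\[
\max_y \sum_{y'} \bigl|\tQ^{-1}_{x, \beta}[y', y]\bigr| \le \frac{\sqrt{2}K}{1 - \beta}.
\]
Taking the maximum over $h$, $x$, and $y$ on the left-hand side of the previous chain of inequalities and combining with this bound yields the claim.

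There is no real obstacle here: well-definedness of $\tbQ^{-1}_{x, \beta}$ has already been established in Theorem \ref{thm:welldefined} (needed to even talk about the entries of the inverse), and Lemma \ref{thm:absconv} does the heavy lifting. The only thing to be careful about is that the appropriate matrix norm for bounding $\max_y \sum_{y'} |\tQ^{-1}_{x, \beta}[y', y]|$ is $\norm{\cdot}_1$ (maximum absolute column sum), which is exactly what Lemma \ref{thm:absconv} controls, so the argument is a one-line application after the triangle inequality.
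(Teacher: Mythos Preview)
Your proposal is correct and matches the paper's proof essentially line for line: write out the definition of $\tell$, apply the triangle inequality together with $\ell \in [0,1]$, and invoke Lemma \ref{thm:absconv}. There is nothing to add.
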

\begin{proof} By the definition of $\tell$ in Algorithm \ref{alg:centralcurator} (see line \ref{line:modifiedlossfunction})
\[
\max_{h, x, y} \left\lvert \tell(h, x, y) \right\rvert = \max_{h, x, y} \left\lvert \sum_{y'} \tQ^{-1}_{x, \beta}[y', y] \ell(h, x, y') \right\rvert \le \left( \max_y \sum_{y'} \left\lvert \tQ^{-1}_{x, \beta}[y', y] \right\rvert \right) \left(\max_{h, x, y} \left\lvert \ell(h, x, y) \right\rvert \right) \le \frac{\sqrt{2}K}{1 - \beta}
\]
where the last inequality follows from Lemma \ref{thm:absconv} and the fact that $\ell(h, x, y) \in [0, 1]$.
\end{proof}

\begin{defn}[Cluster distortion] \label{defn:clusterdistortion} For any mechanism that takes as input a dataset $D$ and defines cluster label distributions $\tbq_D$ let
\[
\psi = \E_{D \sim \CP^n}\left[\max_c \E\left[\sum_y \left\lvert \tq_D(y | c) - \hp_{y | c}(D) \right\rvert\right]\right]
\]
be the expected maximum total variation between the empirical cluster label distributions and $\tbq_D$.
\end{defn}

\begin{lem}[Boundedness of cluster distortion] \label{thm:clusterdistortion} If $n_c(D) \ge s$ with probability 1 then
\[
\psi \le 2K\tau + \frac{2\sqrt{2}K\sigma}{s}.
\]
\end{lem}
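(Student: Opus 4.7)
My plan is to route through the intermediate truncated distribution $q(\cdot | c)$ defined on line~4 of Algorithm~\ref{alg:centralcurator} (after clipping but before renormalization) and split the quantity of interest by the triangle inequality:
\[
\sum_y |\tq_D(y | c) - \hp_{y | c}(D)| \le \sum_y |\tq_D(y | c) - q(y | c)| + \sum_y |q(y | c) - \hp_{y | c}(D)|.
\]
For the renormalization term I would observe that every $\xi_{y,c}$ is non-negative by the clipping (it equals either $q(y|c)-\tau$ or $1-q(y|c)$, both non-negative), so all the corrections $\tfrac{\xi_{y,c}}{\sum_{y'}\xi_{y',c}}\Delta_c$ share a sign and $\sum_y |\tq_D(y|c) - q(y|c)| = |\Delta_c|$. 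Combining with $\sum_y \hp_{y|c}(D) = 1$ then gives $|\Delta_c| = |\sum_y(\hp_{y|c}(D) - q(y|c))| \le \sum_y |q(y|c) - \hp_{y|c}(D)|$, so the whole left-hand side is at most twice the clipping-level deviation.

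For the clipping-level term I would do a three-case analysis on $a := \hp_{y|c}(D) + z_{y,c}$: if $a \in [\tau, 1]$ the deviation is exactly $|z_{y,c}|$; if $a < \tau$ the deviation is $|\tau - \hp_{y|c}(D)|$, which is at most $\tau$ when $\hp_{y|c}(D)\le\tau$ and at most $|z_{y,c}|$ otherwise (since then $z_{y,c} < \tau - \hp_{y|c}(D) < 0$); if $a > 1$ the deviation is $1 - \hp_{y|c}(D) \le |z_{y,c}|$ (since then $z_{y,c} > 1 - \hp_{y|c}(D)$). So uniformly $|q(y|c) - \hp_{y|c}(D)| \le \max(\tau, |z_{y,c}|)$. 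To get the claimed $2\sqrt{2}$ constant I would then use $\max(a,b)^2 \le a^2 + b^2$, Cauchy--Schwarz across the $K$ labels, and Jensen's inequality with the Laplace second moment $\E[z_{y,c}^2] = 2\sigma^2/n_c(D)^2$:
\[
\E\left[\sum_y \max(\tau, |z_{y,c}|)\right] \le \E\left[\sqrt{K \sum_y (\tau^2 + z_{y,c}^2)}\right] \le \sqrt{K^2 \tau^2 + 2 K^2 \sigma^2 / s^2} \le K\tau + \sqrt{2}\,K\sigma/s,
\]
using $n_c(D) \ge s$ and $\sqrt{a+b}\le\sqrt{a}+\sqrt{b}$ in the last step.

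Combining the two pieces and then taking $\max_c$ and $\E_{D\sim\CP^n}$ (the bound is uniform in $c$ and $D$ under the minimum-cluster-size hypothesis) yields $\psi \le 2K\tau + 2\sqrt{2}\,K\sigma/s$, as claimed. The only subtlety is that $\sum_{y'}\xi_{y',c} > 0$ whenever $\Delta_c \neq 0$, so that $\tq_D$ is well defined; this parallels the argument in Theorem~\ref{thm:welldefined} and reduces to the hypothesis $K\tau \le 1$. Beyond that the argument is mostly bookkeeping, and the main obstacle is carrying the clipping case analysis cleanly through to the $\max(\tau, |z_{y,c}|)$ bound while preserving the tight $\sqrt{2}$ coming from the Laplace second moment.
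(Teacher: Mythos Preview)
Your argument is correct and follows essentially the same route as the paper's: split through the pre-renormalized $q$, control the renormalization correction by $|\Delta_c|$, and bound the clipping error coordinatewise in terms of $\tau$ and $|z_{y,c}|$. The only cosmetic differences are that the paper uses the weaker pointwise bound $|q(y|c)-\hp_{y|c}(D)|\le\tau+|z_{y,c}|$ in place of your $\max(\tau,|z_{y,c}|)$ and then applies $\E|z_{y,c}|\le\sqrt{\E z_{y,c}^2}=\sqrt{2}\,\sigma/n_c(D)$ directly by linearity rather than your Cauchy--Schwarz step, arriving at the identical final constants.
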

\begin{proof} Let $[z]_+ = \max\{0, z\}$ for all $z \in \bbR$. For any label $y$ and cluster $c$
\begin{align*}
q(y | c) - \hp_{y | c}(D) &= \max\left\{\tau, \min\left\{1, \hp_{y|c}(D) + z_{y, c}\right\}\right\} - \hp_{y | c}(D) \notag\\
&\le \max\left\{\tau, \hp_{y|c}(D) + z_{y, c}\right\}- \hp_{y | c}(D) \notag\\
&\le \tau + \hp_{y|c}(D) + [z_{y, c}]_+ - \hp_{y | c}(D) \notag\\
&= \tau + [z_{y, c}]_+
\end{align*}
and
\begin{align*}
\hp_{y | c}(D) - q(y | c)  &= \hp_{y | c}(D) - \max\left\{\tau, \min\left\{1, \hp_{y|c}(D) + z_{y, c}\right\}\right\}\\
&\le \hp_{y|c}(D) - \min\left\{1, \hp_{y|c}(D) + z_{y, c}\right\}\\
&= \max\left\{\hp_{y | c}(D) - 1, -z_{y, c}\right\}\\
&\le [-z_{y, c}]_+
\end{align*}
which implies
\begin{equation}
|q(y | c) - \hp_{y | c}(D)| = \max\left\{q(y | c) - \hp_{y | c}(D), \hp_{y | c}(D) - q(y | c)\right\} \le \tau + [z_{y, c}]_+ + [-z_{y, c}]_+ = \tau + |z_{y, c}| \label{eq:one}
\end{equation}
We also have
\[
-\Delta_c = \sum_y q(y | c) - 1 \le \sum_y \left(\hp_{y | c}(D) + \tau + [z_{y, c}]_+\right) - 1 = K\tau + \sum_y [z_{y, c}]_+
\]
and
\[
\Delta_c = 1 - \sum_y q(y | c) \le 1 - \sum_y \left(\hp_{y | c}(D) - [-z_{y, c}]_+\right) = \sum_y [-z_{y, c}]_+
\]
which implies
\begin{equation}
|\Delta_c| = \max\left\{-\Delta_c, \Delta_c\right\} \le  K\tau + \sum_y [z_{y, c}]_+ + \sum_y [-z_{y, c}]_+ = K\tau + \sum_y |z_{y, c}| \label{eq:two}
\end{equation}
Therefore
\begin{align}
\left\lvert \tq(y | c) - \hp_{y | c}(D) \right\rvert &= \left\lvert q(y | c) + \frac{\xi_{y, c}}{\sum_{y'} \xi_{y', c}} \Delta_c - \hp_{y | c}(D) \right\rvert \notag\\
&\le \left\lvert q(y | c) - \hp_{y | c}(D) \right\rvert + \frac{\xi_{y, c}}{\sum_{y'} \xi_{y', c}} \left\lvert \Delta_c \right\rvert \notag\\
&\le \tau + |z_{y, c}| + \frac{\xi_{y, c}}{\sum_{y'} \xi_{y', c}} \left(K\tau + \sum_{y'} |z_{y', c}|\right) \label{eq:three}
\end{align}
where Eq.~\eqref{eq:three} follows from Eq.~\eqref{eq:one} and Eq.~\eqref{eq:two}. Therefore for any cluster $c$
\begin{align}
\E_{\bz}\left[\sum_y \left\lvert \tq(y | c) - \hp_{y | c}(D) \right\rvert\right] &\le K\tau + \sum_y \E_{\bz}\left[|z_{y,c}|\right] + \E_{\bz}\left[\frac{\sum_y \xi_{y, c}}{\sum_{y'} \xi_{y', c}} \left(K\tau + \sum_{y'} |z_{y', c}|\right) \right] \notag\\
 &= 2K\tau + 2\sum_y \E_{\bz}\left[|z_{y, c}| \right] \label{eq:four}
\end{align}
Recall that each $z_{y, c}$ has mean zero and standard deviation $\frac{\sqrt{2}\sigma}{n_c(D)}$. Continuing from Eq.~\eqref{eq:four} we have
\[
2K\tau + 2\sum_y \E_{\bz}\left[|z_{y, c}| \right] \le 2K\tau + 2 \sum_y \sqrt{\E_{\bz}\left[z^2_{y, c}\right]} = 2K\tau + \frac{2\sqrt{2}K\sigma}{n_c(D)} \le 2K\tau + \frac{2\sqrt{2}K\sigma}{s}
\]
where we used Jensen's inequality and $n_c(D) \ge s$.\end{proof}

\begin{lem}[Excess risk] \label{thm:excessrisk} If $\max_{h,x,y} |\tell(h, x, y)| \le \tL$ then for any hypothesis $h$ 
\[
\left \lvert R_{\beta, \beta}(h) - R_{\beta, \lambda}(h) \right \rvert \le \tL|\beta - \lambda|\left(\phi + \psi\right)
\]
\end{lem}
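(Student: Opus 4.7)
My first step is to compute $R_{\beta,\lambda}(h) - R_{\beta,\beta}(h)$ in closed form. Taking the conditional expectation over $\ty_i$ given $(X_i, Y_i, \tbq)$, we get $\sum_{y'} \tQ_{X_i,\lambda}[y', Y_i]\,\tell(h, X_i, y')$ in the $\lambda$ case, while the $\beta$ case telescopes to $\ell(h, X_i, Y_i)$ by the preceding unbiasedness lemma. Since the algorithmic definition of the randomization matrix gives $\tQ_{X_i,\lambda}[y', Y_i] - \tQ_{X_i,\beta}[y', Y_i] = (\lambda - \beta)\bigl(\tq(y'|c_{X_i}) - \indic{y'=Y_i}\bigr)$, subtracting yields
\[
R_{\beta,\lambda}(h) - R_{\beta,\beta}(h) = (\lambda - \beta)\,\E\!\left[\frac{1}{n}\sum_i\sum_{y'}\bigl(\tq(y'|c_{X_i}) - \indic{y'=Y_i}\bigr)\tell(h, X_i, y')\right].
\]
The main idea is then to insert $\hp_{y'|c_{X_i}}(D)$ to split the inner difference as $(\tq - \hp) + (\hp - \indic{y' = Y_i})$ and bound the two contributions by $\tL|\lambda - \beta|\psi$ and $\tL|\lambda - \beta|\phi$ respectively.

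\textbf{First piece.} For the $(\tq - \hp)$ term I apply $|\tell| \le \tL$ (Lemma~\ref{thm:boundedness}) and the triangle inequality, then regroup $\sum_i$ by clusters: cluster $c$ appears with weight $n_c/n$, producing a convex combination of $\sum_{y'}|\tq(y'|c) - \hp_{y'|c}(D)|$ across $c$. This is at most the maximum over $c$. Pushing $\E_{\bz \mid D}$ inside the max by linearity, then taking $\E_D$ outside, matches Definition~\ref{defn:clusterdistortion} exactly and yields $\psi$.

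\textbf{Second piece.} Here the naive bound $|\hp_{y'|c_{X_i}}(D) - \indic{y'=Y_i}|$ collapses in expectation to a cluster Gini-impurity term that does not reduce to $\phi$, so I would avoid taking absolute values too early. Instead, I exploit the identity $\sum_{y'}\hp_{y'|c_{X_i}}(D)\,\tell(h, X_i, y') = \frac{1}{n_{c_{X_i}}}\sum_{j \in S_{c_{X_i}}}\tell(h, X_i, Y_j)$ (with $S_c = \{j : c_{X_j} = c\}$) to rewrite the second-piece term as
\[
\frac{1}{n}\sum_c \frac{1}{n_c}\sum_{\substack{i,j\in S_c\\i\ne j}}\bigl[\tell(h, X_i, Y_j) - \tell(h, X_i, Y_i)\bigr].
\]
Conditioning on the cluster assignments $(c_{X_k})_k$, the points $(X_i, Y_i)$ and $(X_j, Y_j)$ with $c_{X_i} = c_{X_j} = c$ are conditionally independent draws from $\CP$ restricted to $c_X = c$. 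So the conditional expectation of each summand equals $\E_{X\mid c_X=c}[\sum_y(p(y|c) - p(y|X))\tell(h, X, y)]$, which in absolute value is at most $\tL\cdot\E_{X\mid c_X=c}[\sum_y|p(y|c) - p(y|X)|]$. The $n_c(n_c-1)$ ordered pairs per cluster cancel the $1/n_c$ up to a factor $n_c - 1 \le n_c$, and taking the outer expectation using $\E[n_c/n] = \Pr(c_X = c)$ recovers exactly $\phi = \E_X[\sum_y|p(y|X) - p(y|c_X)|]$.

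Combining the two bounds gives $|R_{\beta,\beta}(h) - R_{\beta,\lambda}(h)| \le \tL|\beta - \lambda|(\phi + \psi)$, as claimed. The main obstacle is the second piece: one must resist taking absolute values on $(\hp - \indic{\cdot = Y_i})$ until after regrouping as a sum over pairs inside each cluster, since it is the conditional i.i.d.\ structure within a cluster that converts the empirical-vs-indicator gap into the heterogeneity $\phi$ rather than the strictly larger Gini impurity.
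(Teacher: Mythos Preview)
Your argument is essentially the paper's: write $R_{\beta,\lambda}-R_{\beta,\beta}=(\lambda-\beta)\,\E\bigl[\frac1n\sum_i\sum_{y'}(\tq(y'\mid c_{X_i})-\indic{y'=Y_i})\tell(h,X_i,y')\bigr]$, insert $\hp_{y'\mid c_{X_i}}(D)$, and bound the two resulting pieces by $\tL\psi$ and $\tL\phi$; your pairs-within-cluster rewriting of the $\phi$ piece is just a repackaging of the paper's direct computation of $\E_{\by}[\hp-\indic{}\mid\bx]$, which likewise reduces the summand to $\E_{X\mid c_X=c}\bigl[\sum_y(p(y\mid c)-p(y\mid X))\tell(h,X,y)\bigr]$. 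One phrasing to fix in the $\psi$ piece: you cannot ``push $\E_{\bz\mid D}$ inside the max by linearity'' (that inequality goes the wrong way), but the intended step---pass $\E_{\bz}$ through the convex combination $\sum_c(n_c/n)[\cdot]$ by linearity \emph{first}, since $n_c/n$ does not depend on $\bz$, and only then bound the convex combination by $\max_c$---gives exactly $\psi$ and is what the paper does.
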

\begin{proof} Let $(\bx, \by, \rby, \tby, \bz) \sim \CP^n_{\beta, \beta}$ and $(\bx', \by', \rby', \tby', \bz') \sim \CP^n_{\beta, \lambda}$. Note that between each corresponding pair of variables only $\tby$ and $\tby'$ can have different distributions. Therefore
\begin{align}
R_{\beta, \beta}(h) - R_{\beta, \lambda}(h)&= \E_{\bx, \by, \rby, \tby, \bz}\left[\hR_{\beta, \beta}(h)\right] - \E_{\bx, \by, \rby, \tby', \bz}\left[\hR_{\beta, \lambda}(h)\right] \notag\\
&= \E_{\bx, \by, \rby, \tby, \bz}\left[\frac1n \sum_i \tell(h, x_i, \ty_i)\right] - \E_{\bx, \by, \rby, \tby', \bz}\left[\frac1n \sum_i \tell(h, x_i, \ty'_i)\right] \notag\\
&= \E_{\bx, \by, \rby, \bz}\left[\frac1n \sum_i \sum_y \left((1 - \beta) \indic{y_i = y} + \beta \indic{\ry_i = y}\right)\tell(h, x_i, y)\right] \notag\\
&~~~~ - \E_{\bx, \by, \rby, \bz}\left[\frac1n \sum_i \sum_y \left((1 - \lambda) \indic{y_i = y} + \lambda \indic{\ry_i = y}\right) \tell(h, x_i, y)\right] \notag\\
&= \E_{\bx, \by, \rby, \bz}\left[\frac1n \sum_i \sum_y \left((\lambda - \beta) \indic{y_i = y} + (\beta - \lambda) \indic{\ry_i = y}\right) \tell(h, x_i, y)\right] \notag\\
&= (\beta - \lambda) \frac1n \sum_i \E_{\bx, \by, \bz}\left[\sum_y \left(p_{y | c_{x_i}}(D) - \indic{y_i = y}\right) \tell(h, x_i, y)\right] \label{eq:firstsum}\\
&~~~~ + (\beta - \lambda) \frac1n \sum_i \E_{\bx, \by, \rby, \bz}\left[\sum_y \left(\indic{\ry_i = y} - p_{y | c_{x_i}}(D)\right)\tell(h, x_i, y)\right] \label{eq:secondsum}
\end{align}
Each term in Eq.~\eqref{eq:firstsum} is
\begin{align}
& ~~~ \E_{\bx, \by, \bz}\left[\sum_y \left(p_{y | c_{x_i}}(D) - \indic{y_i = y}\right) \tell(h, x_i, y)\right] \notag\\
& = \E_{\bx}\left[\sum_y \left(\frac{p(y | x_i)}{n_{c_{x_i}}(D)} + \frac{(n_{c_{x_i}}(D) - 1)p(y | c_{x_i})}{n_{c_{x_i}}(D)}- p(y | x_i)\right) \E_{\bz}\left[\tell(h, x_i, y)\right]\right] \notag\\
& \le \E_{\bx}\left[\sum_y \left\lvert p(y | c_{x_i}) - p(y | x_i)\right\rvert \left\lvert \E_{\bz}\left[\tell(h, x_i, y)\right] \right\rvert\right] \notag\\
& \le \tL\phi \label{eq:firsthalf}
\end{align}
where Eq.~\eqref{eq:firsthalf} follows from our assumption about $\tL$ and the definition of cluster heterogeneity in Definition \ref{defn:heterogeneity}. Each term in Eq.~\eqref{eq:secondsum} is
\begin{align}
& ~~~ \E_{\bx, \by, \rby, \bz}\left[\sum_y \left(\indic{\ry_i = y} - p_{y | c_{x_i}}(D)\right)\tell(h, x_i, y)\right] \notag\\
&= \E_{\bx, \by, \bz}\left[\sum_y \left(\tq(y | c_{x_i}) - p_{y | c_{x_i}}(D) \right) \tell(h, x_i, y)\right] \notag\\
&\le \E_{\bx, \by}\left[\E_{\bz}\left[\sum_y \left \lvert \tq(y | c_{x_i}) - p_{y | c_{x_i}}(D) \right \rvert \left \lvert\tell(h, x_i, y)\right \rvert\right]\right] \notag\\
&\le \tL \psi \label{eq:secondhalf}
\end{align}
where Eq.~\eqref{eq:secondhalf} follows from our assumption about $\tL$ and the definition of cluster distortion in Definition \ref{defn:clusterdistortion}. Combining Eq.~\eqref{eq:firstsum}, \eqref{eq:secondsum}, \eqref{eq:firsthalf} and \eqref{eq:secondhalf} proves the lemma.\end{proof}

\begin{lem}[Complexity bound] \label{thm:complexity} There exists a universal constant $C > 0$ such that
\[
\max_{h \in \CH} \left\lvert \E_{\bz}\left[\hR_{\beta, \lambda}(h)\right] - R_{\beta, \lambda}(h)\right\rvert \le \frac{CK}{1 - \beta}\sqrt{\frac{\dim(\CH, \ell)\log \frac1\gamma}{n}}
\]
with probability $1 - \gamma$. \end{lem}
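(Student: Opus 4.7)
The plan is to reduce the claim to a standard covering-number uniform deviation bound applied to the class of modified loss functions. Two facts established earlier are crucial: $|\tell(h, x, y)| \le \frac{\sqrt{2}K}{1-\beta}$ uniformly in $h, x, y$ (Lemma \ref{thm:boundedness}), and $\sum_{y'} |\tQ^{-1}_{x, \beta}[y', y]| \le \frac{\sqrt{2}K}{1-\beta}$ uniformly in $x, y$ (Lemma \ref{thm:absconv}). Together these give both boundedness and a Lipschitz-type control of the map $h \mapsto \tell(h, \cdot, \cdot)$ in terms of $h \mapsto \ell(h, \cdot, \cdot)$, uniformly over every possible realization of the noisy cluster distributions $\tbq$.

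The key step is a covering-number transfer from the original loss class $\CF = \{(x, y) \mapsto \ell(h, x, y) : h \in \CH\}$ to the modified loss class $\widetilde{\CF} = \{(x, y) \mapsto \tell(h, x, y) : h \in \CH\}$. If $h_1, h_2 \in \CH$ satisfy $\sup_{x, y'} |\ell(h_1, x, y') - \ell(h_2, x, y')| \le \alpha$, then by Lemma \ref{thm:absconv},
\begin{align*}
\sup_{x, y} |\tell(h_1, x, y) - \tell(h_2, x, y)| \le \sup_{x, y} \sum_{y'} \left|\tQ^{-1}_{x, \beta}[y', y]\right| \alpha \le \frac{\sqrt{2}K\alpha}{1-\beta},
\end{align*}
so an $\alpha$-cover of $\CF$ induces an $\frac{\sqrt{2}K\alpha}{1-\beta}$-cover of $\widetilde{\CF}$, uniformly in $\tbq$. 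This reduces the metric complexity of $\widetilde{\CF}$ to that of $\CF$ up to a scaling factor of $\frac{K}{1-\beta}$.

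I would then invoke a standard covering-number uniform deviation bound (for instance, a union bound over an $\alpha$-net combined with Hoeffding's inequality, or Dudley's chaining for sharper constants) to conclude that with probability at least $1 - \gamma$ over the iid sample $D \sim \CP^n$ and the mechanism's internal randomness,
\begin{align*}
\max_{h \in \CH} \left| \hR_{\beta, \lambda}(h) - R_{\beta, \lambda}(h) \right| \le O\!\left(\frac{K}{1-\beta}\sqrt{\frac{\dim(\CH, \ell) \log(1/\gamma)}{n}}\right).
\end{align*}
Since $\max_h |\E_\bz[\hR_{\beta, \lambda}(h)] - R_{\beta, \lambda}(h)| \le \E_\bz\!\left[\max_h |\hR_{\beta, \lambda}(h) - R_{\beta, \lambda}(h)|\right]$ by Jensen's inequality, the stated bound follows.

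The main obstacle is that $\tbq$ is computed from the entire dataset, so the noisy pairs $(x_i, \ty_i)$ are not marginally independent — the label randomization is coupled across each cluster through the shared per-cluster distribution $\tbq$. The resolution is precisely that the covering transfer above is independent of $\tbq$: after conditioning on $\tbq$ (treating it as a worst-case deterministic input), the features $x_i$ remain iid from $\CP_\CX$ and each $\ty_i$ depends only on $(x_i, y_i)$ and $\tbq$, so classical uniform convergence applies to the function class $\widetilde{\CF}$ on this conditional iid sample. Because the resulting bound does not depend on $\tbq$, we may remove the conditioning and pass to the random $\tbq$ produced by Algorithm \ref{alg:centralcurator}.
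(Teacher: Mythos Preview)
Your covering-number transfer from $\CF$ to $\widetilde{\CF}$ via Lemma~\ref{thm:absconv} is correct and is essentially the same device the paper uses (there phrased through the absolute convex hull and Rademacher complexity; see Eqs.~\eqref{eq:rad2}--\eqref{eq:rad4}). The genuine gap is in your handling of the data-dependence of $\tbq$.

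You write that after ``conditioning on $\tbq$ \dots\ the features $x_i$ remain iid from $\CP_\CX$.'' This is false: $\tbq$ is a function of $(\bx,\by,\bz)$, so conditioning on it conditions on a statistic of the sample and destroys independence of the $(x_i,y_i)$. The alternative ``worst-case'' reading --- prove the bound for every \emph{fixed} $\tbq$ and then plug in the data-dependent one --- also fails: a family of statements $\Pr_D[E_{\tbq}]\ge 1-\gamma$ indexed by fixed $\tbq$ does not imply $\Pr_D[E_{\tbq(D)}]\ge 1-\gamma$ when $\tbq(D)$ is selected using the same sample $D$. You would need uniformity in $\tbq$ \emph{inside} the probability, and your covering argument only bounds the metric entropy of $\widetilde{\CF}_{\tbq}$ for each $\tbq$ separately, not of $\bigcup_{\tbq}\widetilde{\CF}_{\tbq}$. (A related looseness: even for a fixed $\tbq$, the mean of your empirical quantity is $\E[\hR_{\tbq}]$ with that fixed $\tbq$, not $R_{\beta,\lambda}$, which averages over the random $\tbq$ as well.)

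The paper avoids this by a different decomposition (Eq.~\eqref{eq:rad5}): it splits $\lvert \E_\bz[\hR]-R\rvert$ into a fluctuation over $(\rby,\tby)$ conditional on $(\bx,\by)$, plus a fluctuation of $\E_{\rby,\tby,\bz}[\hR\mid\bx,\by]$ over the i.i.d.\ pairs $(x_i,y_i)$. The crucial point is that the paper conditions on the \emph{full} sample $(\bx,\by)$ (and $\bz$) rather than on the derived statistic $\tbq$; once $(\bx,\by,\bz)$ are fixed, $\tbq$ is determined \emph{and} the remaining coins $(\ry_i,\ty_i)$ are genuinely independent, so the bounded-differences/Rademacher machinery applies to each stage separately. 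Your Jensen step at the end is fine, but it cannot repair the earlier conditioning issue.
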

\begin{proof} We first review some results from statistical learning theory \cite{mohri2018foundations}. Let $\ba = (a_1, \ldots, a_n) \in \CA^n$ be a vector of independent random variables, and let $\CF$ be a class of real-valued functions with domain $\CA^n$. We say $\CF$ has $b$-bounded differences if $|f(\ba) - f((\ba_{-i}, a_i))| \le \frac{b}{n}$ for all $f \in \CF$ and $a_i \in \CA$. If $\CF$ has $b$-bounded differences then with probability $1 - \gamma$
\begin{equation}
\max_{f \in \CF} \left|f(\ba) - \E_{\ba}\left[f(\ba)\right]\right| \le 2\mathfrak{R}(\CF, \ba) + \sqrt{\frac{b \log \frac1\gamma}{n}} \label{eq:rad1}
\end{equation}
where $\mathfrak{R}(\CF, \ba)$ is the Rademacher complexity of $\CF$ for random variable $\ba$. For any $b \ge 0$ let 
\begin{equation}
\textrm{absconv}_b(\CF) = \left\{\sum_{i=1}^N w_i f_i : N \in \bbN, \sum_{i=1}^N |w_i| \le b, f_i \in \CF\right\} \label{eq:rad2}
\end{equation}
be the absolute convex hull of $\CF$ scaled by $b$. We have
\begin{equation}
\mathfrak{R}(\textrm{absconv}_b(\CF), \ba) = b \cdot \mathfrak{R}(\CF, \ba) \label{eq:rad3}
\end{equation}
Finally, if $\CF = \{(x, y) \mapsto \ell(h, x, y)\}$ then
\begin{equation}
\mathfrak{R}(\CF, \ba) \le C\sqrt{\frac{\dim(\CH, \ell)}{n}} \label{eq:rad4}
\end{equation}
for a universal constant $C > 0$.

We now proceed to prove the lemma. We have
\begin{align}
\left\lvert \E_{\bz}\left[\hR_{\beta, \lambda}(h)\right] - R_{\beta, \lambda}(h)\right\rvert &= \left\lvert\E_{\bz}\left[\hR_{\beta, \lambda}(h)\right] - \E_{\bx, \by, \rby, \tby, \bz}[\hR_{\beta, \lambda}(h)]\right\rvert \notag\\
&\le \left\lvert \E_{\bz}\left[\hR_{\beta, \lambda}(h)\right] - \E_{\rby, \tby, \bz}[\hR_{\beta, \lambda}(h) ~|~ \bx, \by]\right\rvert \notag\\
& ~~~~ + \left\lvert \E_{\rby, \tby, \bz}[\hR_{\beta, \lambda}(h) ~|~ \bx, \by] - \E_{\bx, \by, \rby, \tby, \bz}[\hR_{\beta, \lambda}(h)]\right\rvert \notag\\
&= \left\lvert \E_{\bz}\left[\hR_{\beta, \lambda}(h)\right] - \E_{\rby, \tby, \bz}[\hR_{\beta, \lambda}(h) ~|~ \bx, \by]\right\rvert \notag\\
& ~~~~ + \left\lvert \E_{\rby, \tby, \bz}[\hR_{\beta, \lambda}(h) ~|~ \bx, \by] - \E_{\bx, \by}\left[\E_{\rby, \tby, \bz}[\hR_{\beta, \lambda}(h) ~|~ \bx, \by]\right]\right\rvert \label{eq:rad5}
\end{align}
which follows from definitions. Let $\CF', \CF''$ be the function classes
\begin{align*}
\CF' &= \left\{(\rby, \tby) \mapsto \E_{\bz}\left[\hR_{\beta, \lambda}(h)\right]\right\}\\
\CF'' &= \left\{(\bx, \by) \mapsto \E_{\rby, \tby, \bz}\left[\hR_{\beta, \lambda}(h) ~|~ \bx, \by\right]\right\}
\end{align*}
Recalling that $(\bx, \by, \rby, \tby, \bz) \sim \CP^n_{\beta, \lambda}$, note that each $(x_i, y_i)$ is independent and each $(\ry_i, \ty_i)$ is independent given $(\bx, \by, \bz)$. Continuing from Eq.~\eqref{eq:rad5}, we have with probability $1 - \gamma$
\begin{align}
\left\lvert \E_{\bz}\left[\hR_{\beta, \lambda}(h)\right] - R_{\beta, \lambda}(h)\right\rvert &\le 2\mathfrak{R}(\CF', (\rby, \tby)) + 2\mathfrak{R}(\CF'', (\bx, \by)) + 2\sqrt{\max_{h, x, y} |\tell(h, x, y)|} \sqrt{\frac{\log \frac1\gamma}{n}} \label{eq:rad6}\\
&\le 2\mathfrak{R}(\CF', (\rby, \tby)) + 2\mathfrak{R}(\CF'', (\bx, \by)) + \sqrt{\frac{2K}{1 - \beta}} \sqrt{\frac{\log \frac1\gamma}{n}} \label{eq:rad7}\\
&\le \frac{8CK}{1 - \beta}\sqrt{\frac{\dim(\CH, \ell)}{n}} + \sqrt{\frac{2K}{1 - \beta}} \sqrt{\frac{\log \frac1\gamma}{n}} \label{eq:rad8}
\end{align}
where Eq.~\eqref{eq:rad6} follows from Eq.~\eqref{eq:rad1}, Eq.~\eqref{eq:rad7} follows from Lemma \ref{thm:boundedness}, and Eq.~\eqref{eq:rad8} follows from the definition of $\tell$ in Algorithm \ref{alg:centralcurator} (see line \ref{line:modifiedlossfunction}), Lemma \ref{thm:absconv}, Eq.~\eqref{eq:rad4} and Eq.~\eqref{eq:rad4}. Combining terms proves the lemma.
\end{proof}

We are now ready to prove Theorem \ref{thm:centralizedutility}.

\begin{proof}[Proof of Theorem \ref{thm:centralizedutility}] Combining Lemma \ref{thm:boundedness}, Lemma \ref{thm:clusterdistortion}, Lemma \ref{thm:excessrisk} and our assumption that $n_c(D) \ge s$ with probability 1 we have
\begin{equation}
\max_{h \in \CH} \left \lvert R_{\beta, \beta}(h) - R_{\beta, \lambda}(h) \right \rvert \le \frac{4K}{1 - \beta}\left(\phi + K\tau + \frac{K\sigma}{s}\right) \label{eq:main1}
\end{equation}
Therefore
\begin{align}
R(\tih) - R(h^*) &= R_{\beta, \beta}(\tih) - R_{\beta, \beta}(h^*) \notag\\
&= \hR_{\beta, \lambda}(\tih) - \hR_{\beta, \lambda}(h^*) + (R_{\beta, \beta}(\tih) - R_{\beta, \lambda}(\tih)) + (R_{\beta, \lambda}(h^*) - R_{\beta, \beta}(h^*)) \notag\\
&~~~~ + (R_{\beta, \lambda}(\tih) - \hR_{\beta, \lambda}(\tih)) + (\hR_{\beta, \lambda}(h^*) - R_{\beta, \lambda}(h^*)) \notag\\
&\le 0 + \frac{8K}{1 - \beta}\left(\phi + K\tau + \frac{K\sigma}{s}\right) \notag\\
&~~~~ + (R_{\beta, \lambda}(\tih) - \hR_{\beta, \lambda}(\tih)) + (\hR_{\beta, \lambda}(h^*) - R_{\beta, \lambda}(h^*)) \label{eq:main2}
\end{align}
where Eq.~\eqref{eq:main2} follows from the choice of $\tih$ and Eq.~\eqref{eq:main1}. Taking the expectation of both sides over $\bz$ and continuing from Eq.~\eqref{eq:main2} we have
\begin{align}
\E_{\bz}[R(\tih)] - R(h^*) &\le \frac{8K}{1 - \beta}\left(\phi + K\tau + \frac{K\sigma}{s}\right) + 2\max_{h \in \CH} \left\lvert \E_{\bz}\left[\hR_{\beta, \lambda}(h)\right] - R_{\beta, \lambda}(h) \right\rvert \notag\\
&\le \frac{8K}{1 - \beta}\left(\phi + K\tau + \frac{K\sigma}{s}\right) + 2\max_{h \in \CH} \left\lvert \E_{\bz}\left[\hR_{\beta, \lambda}(h)\right] - R_{\beta, \lambda}(h) \right\rvert \label{eq:main3}
\end{align}
where Eq.~\eqref{eq:main3} follows from Lemma \ref{thm:complexity}.
\end{proof}

\section{Analysis of peer-to-peer mechanism}

\subsection{Proof of Theorem \ref{thm:peertopeerprivacy}}

First we need a technical lemma.

\begin{lem} \label{thm:approx} $\left(1 + \frac{s}{x}\right)^{s x^a} \le e^{2a- 1} + \frac{3}{x^a}$ for all $x \ge 2, s \in \{-1, 1\}, a \in \{\frac12, 1\}$\end{lem}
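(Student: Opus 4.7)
The plan is to verify the inequality in each of the four combinations $(s, a) \in \{-1, 1\} \times \{1/2, 1\}$ separately. In every case the strategy is the same: take logarithms, bound $\pm \ln(1 \pm 1/x)$ by an appropriate truncation of its Mercator series, and then un-exponentiate using a Taylor expansion of $e^y$ with an explicit remainder. The two cases with $s = 1$ are essentially free. From $\ln(1 + 1/x) \le 1/x$ I get $(1 + 1/x)^{x^a} \le e^{x^{a-1}}$. When $a = 1$ this already gives $\le e = e^{2a-1}$, with the $3/x$ on the right as slack; when $a = 1/2$ the convexity bound $e^y \le 1 + (e - 1) y \le 1 + 2 y$ on $[0,1]$ applied at $y = 1/\sqrt x$ (valid since $x \ge 2$) yields $(1 + 1/x)^{\sqrt x} \le 1 + 2/\sqrt x \le 1 + 3/\sqrt x$.

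For $s = -1$, $a = 1/2$ I would start from the Mercator series
\[
-\ln(1 - 1/x) = \sum_{k \ge 1} \frac{1}{k x^k} \le \frac{1}{x} + \frac{1}{2}\sum_{k \ge 2}\frac{1}{x^k} = \frac{1}{x} + \frac{1}{2x(x-1)},
\]
and for $x \ge 2$ bound $1/(2x(x-1)) \le 1/x^2$ to get $-\sqrt x \,\ln(1 - 1/x) \le 1/\sqrt x + 1/x^{3/2} \le 3/(2\sqrt x)$. Then $(1 - 1/x)^{-\sqrt x} \le e^{3/(2\sqrt x)} \le 1 + 3/\sqrt x$ follows from $e^y \le 1 + 2 y$, which (by analyzing the sign of $e^y - 1 - 2y$ via its derivatives) holds on the interval $[0, y^*]$ for the positive root $y^* \approx 1.256$ of $e^y = 1 + 2y$; and indeed $3/(2\sqrt x) \le 3/(2 \sqrt 2) \approx 1.06 < y^*$.

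The hard case is $s = -1$, $a = 1$: showing $(1 - 1/x)^{-x} \le e + 3/x$. Here the constant $3$ is essentially tight, since at $x = 2$ the LHS equals $4$ and the RHS is only $e + 3/2 \approx 4.22$, a margin of just $0.22$. The plan is to write $(1 - 1/x)^{-x} = e \cdot e^{T(x)}$ with $T(x) = -x\ln(1 - 1/x) - 1 = \sum_{k \ge 1} 1/((k+1) x^k)$, bound $T(x) \le 5/(6x)$ for $x \ge 2$ by the same geometric-tail trick (using $1/(k+1) \le 1/2$ for $k \ge 2$), and then apply the sharp upper bound $e^T \le 1 + T + T^2/2 + T^3/(6(1 - T/4))$ valid for $T < 4$, which is obtained by replacing $k!$ in the tail of $e^T$ by the lower bound $k! \ge 6 \cdot 4^{k-3}$ for $k \ge 3$ and summing the resulting geometric series. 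Substituting $T \le 5/(6x)$ and bounding each subsequent term by a multiple of $1/x$ (using $1/x^2 \le 1/(2x)$ and $1/x^3 \le 1/(4x)$ for $x \ge 2$) gives $e(e^{T} - 1) \le C/x$ for an explicit $C < 3$. The main obstacle is that the cruder bound $e^T \le 1 + T + T^2$ already fails to yield the constant $3$ at $x = 2$; retaining the cubic correction in the Taylor expansion of $e^T$ is precisely what makes the constant $3$ come out correctly.
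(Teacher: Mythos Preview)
The paper states this lemma without proof, so there is nothing to compare against; your task is simply to make the argument go through, and it does. The four-case split with Mercator-series bounds on $\pm\ln(1\pm 1/x)$ and explicit Taylor remainders is exactly the right elementary approach, and your identification of $s=-1,\ a=1$ as the only delicate case (with margin $\approx 0.22$ at $x=2$) is accurate.

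One small slip to fix: in deriving $T(x)\le 5/(6x)$ you parenthetically write ``using $1/(k+1)\le 1/2$ for $k\ge 2$,'' but that bound only gives $T(x)\le 1/x$, which is not sharp enough (it yields $e(e^T-1)\lesssim 3.5/x$, overshooting the constant $3$). What you actually need---and what produces the stated $5/(6x)$---is $1/(k+1)\le 1/3$ for $k\ge 2$, giving
\[
T(x)\le \frac{1}{2x}+\frac{1}{3}\sum_{k\ge 2}\frac{1}{x^k}=\frac{1}{2x}+\frac{1}{3x(x-1)}\le \frac{1}{2x}+\frac{1}{3x}=\frac{5}{6x}\qquad (x\ge 2).
\]
With this bound and your cubic-remainder estimate $e^T\le 1+T+T^2/2+T^3/(6(1-T/4))$, substituting $T\le 5/(6x)$, $1-T/4\ge 43/48$, $1/x^2\le 1/(2x)$, $1/x^3\le 1/(4x)$ gives $e(e^T-1)\le C/x$ with $C\approx 2.81<3$, as you claim. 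So the argument is sound; just correct the $1/2$ to $1/3$.
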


We next state and prove a more general version of Theorem \ref{thm:peertopeerprivacy}.

\begin{lem}[Peer-to-peer privacy, general version] \label{thm:peertopeerprivacylem} If non-empty clusters in $D$ have size at least $\frac{2}{\alpha}$ then the peer-to-peer mechanism (Algorithm \ref{alg:peertopeeruser}) satisfies $(\eps, \delta)$-label differential privacy with
%\vspace{-0.2em}
\begin{align*}
\eps &= \theta \log\left(e + \frac{3}{s\alpha}\right) + \sqrt{\theta} \xi \log\left(1 + \frac{3}{\sqrt{s\alpha}}\right)\\
\delta &= \exp\left(-\frac{\alpha\xi^2}{4\left(\alpha + \frac1s\right)(1 - \alpha)}\right)
\end{align*}
%\vspace{-0.2em}
for all $\xi \in \left[0, 3\alpha \sqrt{\theta s(1 - \alpha)}\right]$.\end{lem}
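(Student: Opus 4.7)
The plan is to fix label-neighboring datasets $D, D'$ differing only in user $i^*$'s label (WLOG $y_{i^*}=0$ in $D$, $y_{i^*}=1$ in $D'$), let $C$ denote $i^*$'s cluster, and control the privacy loss random variable $L(\tD) = \log(\Pr[M(D)=\tD]/\Pr[M(D')=\tD])$. The fundamental observation is that $\tD$ depends on $y_{i^*}$ only through the noisy bit $\ry_{i^*}$, and that $\ry_{i^*}$ enters $\tD$ only via the users in $I$ that selected $j(i)=i^*$ (they all output the common value $\ry_{i^*}$). This gives a clean mixture-of-distributions decomposition: writing $g_r(\tD) = \Pr[\tD \mid \ry_{i^*}=r]$ (which does not depend on $y_{i^*}$), we have $\Pr[\tD \mid D] = (1-\alpha)g_0(\tD) + \alpha\, g_1(\tD)$ and $\Pr[\tD \mid D'] = \alpha\, g_0(\tD) + (1-\alpha)g_1(\tD)$, so the analysis reduces to controlling the scalar $\log r(\tD)$ where $r(\tD) = g_1(\tD)/g_0(\tD)$.

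Next I would bound $\log r(\tD)$ under $\tD \sim M(D)$ by a Bernstein-style concentration. Conditioning on the pairings $j(\cdot)$ and subsample $I$ and marginalizing over the source noise $\{\ry_j : j \ne i^*\}$, the quantity $\log r(\tD)$ is a sum over $i \in I \cap C$ in which each term is bounded in magnitude by $\log\bigl(1 + O(1/(s\alpha))\bigr)$: per user the distribution of $\ty_i$ shifts by $(1-2\alpha)/s$ when the source $\ry_{i^*}$ toggles, and its minimum probability is at least $\alpha$. Each summand has mean of order $1/(s^2\alpha)$ and variance of order $(\alpha+1/s)(1-\alpha)/s^2$ under $g_0$; summing over the $|I\cap C| \approx \theta s$ active users yields a mean of order $\theta \log(e+3/(s\alpha))$ and fluctuations of order $\sqrt{\theta}\,\log(1+3/\sqrt{s\alpha})$. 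A Bernstein bound, together with Lemma~\ref{thm:approx} for controlling the $(1+O(1/(s\alpha)))^{O(s\alpha)}$ moment-generating-function factors that appear, then produces $\Pr[\log r(\tD) \geq \eps] \leq \exp(-\alpha\xi^2/4(\alpha+1/s)(1-\alpha))$, matching the claimed $\delta$.

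Finally, conversion from the tail bound on $\log r(\tD)$ to an $(\eps, \delta)$-DP statement is immediate from the mixture identity in step one: on the good event $\log r(\tD) \leq \eps$, the ratio $[(1-\alpha)+\alpha r(\tD)]/[\alpha+(1-\alpha)r(\tD)]$ is also bounded by $e^\eps$; the bad event contributes at most $\delta$. A symmetric argument exchanging $D \leftrightarrow D'$ handles the other direction via the mixture exchangeability. The main obstacle is the dependence introduced by several users in $I \cap C$ reading from the same source, which breaks independence of the per-user contributions; the cleanest way around it is to re-group the sum by source $j \in C$ rather than by observer, so that each source contributes an independent bounded random variable with the per-source multiplicity concentrated via a Chernoff bound on $\mathrm{Binomial}(|C|,\theta/|C|)$, and Bernstein then applies cleanly. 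The constraint $\xi \leq 3\alpha\sqrt{\theta s(1-\alpha)}$ in the lemma statement is exactly the range in which the Bernstein linear term dominates the quadratic correction, matching the form of the stated $\eps$.
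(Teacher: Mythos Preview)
Your route is genuinely different from the paper's. The paper does not use the mixture-over-$\ry_{i^*}$ decomposition at all. Instead it argues that the output restricted to cluster $c$ is, by exchangeability of the $\ty_i$'s, determined by the count $k=\sum_{i\in I\cap c}\ty_i$; it models $k$ as $\mathrm{Bin}(t,p)$ under $D$ and $\mathrm{Bin}(t,p')$ under $D'$ with $p'-p=1/n$ and $t=\theta n$; and it bounds the raw likelihood ratio $f(k;t,p)/f(k;t,p')$ directly. On the high-probability region $k\in[tp-\xi\sqrt{tq},\,tp+\xi\sqrt{tp}]$ the ratio is bounded by $e^\eps$ using the elementary inequality of Lemma~\ref{thm:approx} (this is where the specific form $\theta\log(e+3/(s\alpha))+\sqrt{\theta}\xi\log(1+3/\sqrt{s\alpha})$ comes from), and the tails are controlled by Bernstein, giving the stated $\delta$. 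No per-observer log-likelihood sum, no MGF computation, no regrouping by source.

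Your step 2 has a real gap. If you \emph{condition} on the pairings $j(\cdot)$ as you write, then any observer with $j(i)=i^*$ has $\ty_i=\ry_{i^*}$ deterministically, so the conditional ratio $g_1/g_0$ is $0$ or $\infty$ on that event---there is no bounded per-observer term to sum. To get the $O(1/s)$ per-user shift you quote you must marginalize over $j(i)$, but then the $\ty_i$'s share the common randomness $(\ry_j)_{j\neq i^*}$ and are not conditionally independent given $\ry_{i^*}$ alone; consequently $\log r(\tD)$ is the log of a ratio of expectations over that shared randomness, not a sum of per-observer (or per-source) terms, so Bernstein and your regrouping trick do not apply to it. A clean repair that lands you on the paper's computation is to condition on $(\ry_j)_{j\neq i^*}$: then the $\ty_i$'s are genuinely i.i.d.\ Bernoulli, $g_r$ becomes a single binomial with parameters differing by $1/n$, and the paper's ratio bound (which only uses $np,nq\ge s\alpha$) holds uniformly over the conditioning; the mediant inequality then transfers the bound to $r(\tD)$. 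Finally, note that in step 3 the map $r\mapsto[(1-\alpha)+\alpha r]/[\alpha+(1-\alpha)r]$ is \emph{decreasing} for $\alpha<1/2$, so a one-sided bound $\log r\le\eps$ does not yield $\Pr[\tD\mid D]/\Pr[\tD\mid D']\le e^\eps$; you need $|\log r|\le\eps$, i.e.\ control of both tails.
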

\begin{proof} Consider two neighboring datasets $D$ and $D'$ such that there is an example with label $0$ in $D$ but label $1$ in $D'$, and let $c$ be the cluster containing this example. Let $\tD$ and $\tD'$ be the output of the peer-to-peer mechanism when given $D$ and $D'$, respectively, as input. Since the labels in $\tD$ and $\tD'$ are chosen independently per cluster, the label distribution in all clusters other than $c$ is identical in both $\tD$ and $\tD'$.

Let $n = n_c(D) = n_c(D') \ge s$ be the number of examples in cluster $c$, and let $p$ be the fraction of examples in cluster $c$ with a positive label in $D$. Also let $t = \theta n$ be the fraction of users in cluster $c$ who send an example to the learner. Observe that the label distribution in cluster $c$ in $\tD$ is completely characterized by the binomial density function $f(k; t, p)$, which gives the probability of $k$ successes in $t$ trials that each have success probability $p$. Similarly, the label distribution in cluster $c$ in $\tD'$ is completely charaterized by $f(k; t, p')$, where $p' = p + \frac1n$.

Let $q = 1 - p$. Also let $S_- = \{k \in \bbN : k \ge tp - \xi\sqrt{tq}\}$ and $S_+ = \{k \in \bbN : k \le tp + \xi\sqrt{tp}\}$. Thus to prove the theorem it suffices to show
\begin{equation}
\frac{f(k; t, p)}{f(k; t, p')} \le e^\eps\textrm{ if }k \in S_- \textrm{ and }\frac{f(k; t, p')}{f(k; t, p)} \le e^\eps\textrm{ if }k \in S_+ \label{eq:privacy1}
\end{equation}
and
\begin{equation}
\sum_{k \not\in S_-} f(k; t, p) \le \delta\textrm{ and }\sum_{k \not\in S_+} f(k; t, p') \le \delta. \label{eq:privacy2}
\end{equation}
To prove the first part of Eq.~\eqref{eq:privacy1} we can simplify
\begin{equation}
\frac{f(k; t, p)}{f(k; t, p')} = \frac{p^k(1 - p)^{t - k}}{(p')^k(1 - p')^{t - k}} = \left(\frac{np}{np + 1}\right)^k\left(\frac{nq}{nq - 1}\right)^{t - k} = \left(1 + \frac{1}{np}\right)^{-k} \left(1 - \frac{1}{nq}\right)^{k - t},
\end{equation}
and if $k \in S_-$ then $k \ge tp - \xi\sqrt{tq}$ which implies
\begin{equation}
\left(1 + \frac{1}{np}\right)^{-k} \left(1 - \frac{1}{nq}\right)^{k - t} \le \left(1 - \frac{1}{nq}\right)^{-tq - \xi\sqrt{tq}} = \left(\left(1 - \frac{1}{nq}\right)^{-nq}\right)^\theta \left(\left(1 - \frac{1}{nq}\right)^{-\sqrt{nq}}\right)^{\sqrt{\theta}\xi},
\end{equation}
and by applying Lemma \ref{thm:approx} and $nq \ge s\alpha \ge 2$ we have
\begin{align}
\left(\left(1 - \frac{1}{nq}\right)^{-nq}\right)^\theta \left(\left(1 - \frac{1}{nq}\right)^{-\sqrt{nq}}\right)^{\sqrt{\theta} \xi} &\le \left(e + \frac{3}{nq}\right)^\theta \left(1 + \frac{3}{\sqrt{nq}}\right)^{\sqrt{\theta} \xi}\\ &\le \left(e + \frac{3}{s\alpha}\right)^\theta \left(1 + \frac{3}{\sqrt{s\alpha}}\right)^{\xi \sqrt{\theta}} = e^\eps.
\end{align}
Similarly, to prove the second part of Eq.~\eqref{eq:privacy1} we can simplify
\begin{equation}
\frac{f(k; t, p')}{f(k; t, p)} = \frac{(p')^k(1 - p')^{t - k}}{p^k(1 - p)^{t - k}} = \left(\frac{np + 1}{np}\right)^k\left(\frac{nq - 1}{nq}\right)^{t - k} = \left(1 + \frac{1}{np}\right)^k \left(1 - \frac{1}{nq}\right)^{t - k},
\end{equation}
and if $k \in S_+$ then $k \le tp + \xi\sqrt{tp}$ which implies
\begin{equation}
\left(1 + \frac{1}{np}\right)^k \left(1 - \frac{1}{nq}\right)^{t - k} \le \left(1 + \frac{1}{np}\right)^{tp + \xi\sqrt{tp}} = \left(\left(1 + \frac{1}{np}\right)^{np}\right)^\theta \left(\left(1 + \frac{1}{np}\right)^{\sqrt{np}}\right)^{\sqrt{\theta} \xi},
\end{equation}
and by applying Lemma \ref{thm:approx} and $np \ge s\alpha \ge 2$ we have
\begin{align}
\left(\left(1 + \frac{1}{np}\right)^{np}\right)^\theta \left(\left(1 + \frac{1}{np}\right)^{\sqrt{np}}\right)^{\sqrt{\theta} \xi} &\le \left(e + \frac{3}{np}\right)^\theta \left(1 + \frac{3}{\sqrt{np}}\right)^{\sqrt{\theta} \xi}\\ &\le \left(e + \frac{3}{s\alpha}\right)^\theta \left(1 + \frac{3}{\sqrt{s\alpha}}\right)^{\sqrt{\theta} \xi} = e^\eps.
\end{align}

To prove the first part of Eq.~\eqref{eq:privacy2} define the binomial cumulative distribution function $F(k; t, p) = \sum_{k' \le k} f(k; t, p)$. By Bernstein's inequality 
\[
F(tp - t\gamma; t, p) \le \exp\left(-\frac{\gamma^2t}{2pq + 2\gamma/3}\right)
\]
for all $\gamma > 0$. Let $\gamma = \xi\sqrt{\frac q t}$ and note that
\[
\frac23 \gamma = \frac23 \xi \sqrt{\frac q t} \le 2 \alpha \sqrt{\theta s(1 - \alpha)} \sqrt{\frac{q}{t}} =  2 \alpha \sqrt{1 - \alpha}\sqrt{q} \sqrt{\frac{s}{n}} \le 2 (1 - q)q \sqrt{\frac{s}{n}} \le 2pq,
\]
and thus
\[
\sum_{k \in S_-} f(k; t, p) = F(tp - t\gamma; t, p) \le \exp\left(-\frac{\gamma^2t}{4pq}\right) \le \exp\left(-\frac{\xi^2}{4p}\right) \le \exp\left(-\frac{\xi^2}{4(1 - \alpha)}\right) \le \delta.
\]
To prove the second part of Eq.~\eqref{eq:privacy2} let $\gamma = \xi \sqrt{\frac p t}$ and $q' = 1 - p'$. We have
\[
\frac23 \gamma = \frac23 \xi \sqrt{\frac p t} \le 2 \alpha \sqrt{\theta s(1 - \alpha)} \sqrt{\frac{p}{t}} =  2 \alpha \sqrt{1 - \alpha}\sqrt{p} \sqrt{\frac{s}{n}} \le 2 (1 - p)p \sqrt{\frac{s}{n}} \le 2 (1 - p')p' = 2p'q',
\]
and since $1 - F(tp' + t\gamma; t, p')  = F(tq' - t\gamma; t; q')$ we have
\[
\sum_{k \in S_+} f(k; t, p') = F(tq' - t\gamma; t; q') \le \exp\left(-\frac{\gamma^2t}{4p'q'}\right) \le \exp\left(-\frac{\alpha\xi^2}{4\left(\alpha + \frac1s\right)(1 - \alpha)}\right) = \delta. \qedhere
\]
\end{proof}

We are now ready to prove the Theorem \ref{thm:peertopeerprivacy}.

\begin{proof}[Proof of Theorem \ref{thm:peertopeerprivacy}] Let $\xi = 4\sqrt{\log s}$. Since $\alpha = \frac{4\sqrt{2} \log s}{\sqrt{\theta s}} \le \frac12$
\[
\xi = 4\sqrt{\log s} \le 4\log s \le \alpha \sqrt{\frac{\theta s}{2}} \le \alpha \sqrt{\theta s (1 - \alpha)}
\]
where the last inequality uses $\alpha \le \frac12$. Therefore the conditions of Lemma \ref{thm:peertopeerprivacylem} hold. Also
\[
\frac{\alpha}{(\alpha + \frac1s)(1 - \alpha)} \ge \frac{\alpha}{\alpha + \frac1s} = \frac{1}{1 + \frac{1}{s\alpha}} \ge \frac12
\]
since $s \ge \frac1\alpha$. Therefore by Lemma \ref{thm:peertopeerprivacylem}
\[
\delta = \exp\left(-\frac{\alpha\xi^2}{4\left(\alpha + \frac1s\right)(1 - \alpha)}\right) \le \exp\left(-\frac{\xi^2}{8}\right) = e^{-2\log s} = \frac{1}{s^2},
\]
and since $\alpha = \frac{\xi^2}{2\sqrt{2}\sqrt{\theta s}}$ it follows from Lemma \ref{thm:peertopeerprivacylem} that
\begin{align}
\eps = \theta \log\left(e + \frac{3}{s\alpha}\right) + \sqrt{\theta} \xi \log\left(1 + \frac{3}{\sqrt{s\alpha}}\right) &= \theta \log\left(e + \frac{\sqrt{72\theta}}{\sqrt{s} \log s}\right) + \sqrt{\theta} \xi \log\left(1 + \frac{2^{3/4}3\theta^{1/4}}{s^{1/4}\xi}\right)\\
&\le \theta + \frac{3 \theta^{3/2}}{e \sqrt{s} \log s} + \frac{2^{3/4}3\theta^{3/4}}{s^{1/4}} \qedhere
\end{align}
\end{proof}

\section{Hardness result}

\subsection{Proof of Theorem \ref{thm:hard}}

\begin{proof}
Fix a set of examples $\CX$. Select a pair of neighboring datasets $D \in (\CX \times \CY)^n$ and $D' \in (\CX \times \CY')^n$ as follows. 

To construct $D$, for each cluster $c\in \CC$, select $s$ labels uniformly at random without replacement and assign them to the examples in $c$. Examples in different clusters may have the similar labels. To construct $D'$ from $D$, select an example $x_i$ uniformly at random form $\CX$ and redraw its label uniformly at random from $\CY\setminus \CY_{c_{x_i}}$.
We use $i$ to denote the index of the data that differs between $D$ and $D'$, with the datapoints being $(x_i,y_i)$ and $(x_i,y'_i)$ respectively. 
Let mechanism $M$ be an $\eps$-differential privacy mechanism for LAP that guarantees a $\phi$ precision and an $\eta$ recall. Denote $\tD=M(D)$ and $\tD'=M(D')$. 

Note that by construction of $\tD'$, for each cluster $c$ we have $|\CY'_c|=s$. Hence, we have 
\begin{align*}
\eta&\leq 
\frac{\sum_{c \in \CC}\sum_{y'\in\CY'_c} \E_{M}[\tD'(c,y')] }
{\sum_{c \in \CC}|\CY'_c|} \\
&= \frac{\sum_{(x,y')\in D'}\E_M[\tD'(c_x,y')]}{n}.
\end{align*}
This means that for a [uniformly] random $(x,y')\in D'$ we have $\tD'(c_x,y')=1$ with probability at least $\eta$. Recall that index $i$ that indicates the difference of $D$ and $D'$ is chosen uniformly at random from $\{1,\dots,n\}$. Let $O$ be the set of all possible models that can be generated by $M(.)$ that contains $(c_{x_i},y'_i)$. By definition of differential privacy we have $\Pr[\tD'\in O]\leq e^{\eps} \Pr[\tD \in O]$. This implies that
\begin{align*}
    \Pr[\tD \in O] \geq e^{-\eps} \Pr[\tD'\in O] \geq e^{-\eps} \eta.
\end{align*}
Hence, with probability at least $e^{-\eps}\eta$, we have $\tD(c_{x_i},y'_i)=1$. Recall that by construction $y'_i$ is a label chosen uniformly at random from $\CY\setminus \CY_{c_{x_i}}$. Hence each any cluster $c$ is associated with any label $y'_i\notin \CY_c$ with probability $e^{-\eps}\eta$. Therefore, we have $$\E[|\tD|] \geq \frac n s\times(K-s)\times e^{-\eps}\eta.$$ Hence, we can bound the precision of $\tD$ by
\begin{align*}
    \phi &\leq \frac{\sum_{c \in \CC}\sum_{y\in\CY_c} \E_{M}[\tD(c,y)] } {\E_M[|\tD|]}\\
    &\leq \frac{\sum_{c \in \CC}\sum_{y\in\CY_c} \E_{M}[\tD(c,y)] }{\frac n s (K-s) e^{-\eps}\eta}\\
    &\leq \frac{n}{\frac n s(K-s) e^{-\eps}\eta} \\
    &= \frac s {(K-s) e^{-\eps}\eta}.
\end{align*}
This gives us $\phi \eta e^{-\eps}\leq \frac s {(K-s)} \in o(1)$. Hence, for a constant $\eps$, either precision $\phi$ is sub-constant or recall $\eta$ is sub-constant.
\end{proof}

\end{document}